\documentclass[letterpaper,10pt]{article}
\usepackage{tabularx} 
\usepackage{graphicx} 
\usepackage[margin=1in]{geometry} 
\usepackage{cite} 
\usepackage[final]{hyperref} 
\hypersetup{
	colorlinks=true,       
	linkcolor=blue,        
	citecolor=blue,        
	filecolor=magenta,     
	urlcolor=blue         
}
\usepackage{indentfirst}
\usepackage{blindtext}
\usepackage{natbib}
\usepackage[utf8]{inputenc} 
\usepackage[T1]{fontenc}    
\usepackage{hyperref}       
\usepackage{url}            
\usepackage{booktabs}       
\usepackage{amsfonts}       
\usepackage{nicefrac}       
\usepackage{microtype}      
\usepackage{xcolor}         
\usepackage{dsfont}
\usepackage{subfig}
\usepackage{makecell}
\usepackage{amsmath}
\usepackage{amssymb}
\usepackage{mathtools}
\usepackage{amsthm}
\usepackage{multirow}
\usepackage{setspace}
\usepackage{booktabs}
\usepackage{algorithm}
\usepackage{algorithmic}
\theoremstyle{definition}

\usepackage{xspace}
\usepackage{longtable}
\usepackage{wrapfig}
\usepackage{enumitem}
\usepackage{comment}
\usepackage{etoc}
\setlist{leftmargin=5mm}
\usepackage[export]{adjustbox}

\theoremstyle{plain}
\newtheorem{theorem}{Theorem}[section]
\newtheorem{proposition}[theorem]{Proposition}

\newtheorem{corollary}[theorem]{Corollary}
\theoremstyle{definition}

\theoremstyle{remark}

\usepackage{todonotes}
\usepackage{tablefootnote}
\usepackage{xcolor}
\usepackage[capitalize,noabbrev]{cleveref}

\setlength{\skip\footins}{12pt}


\usepackage{amsmath,amsfonts,bm}









\def\eqref#1{Eqn.~(\ref{#1})}









\def\1{\bm{1}}








\def\vf{{\bm{f}}}

\def\vv{{\bm{v}}}
\def\vw{{\bm{w}}}
\def\vx{{\bm{x}}}

\def\vz{{\bm{z}}}



\DeclareMathAlphabet{\mathsfit}{\encodingdefault}{\sfdefault}{m}{sl}
\SetMathAlphabet{\mathsfit}{bold}{\encodingdefault}{\sfdefault}{bx}{n}


\def\gE{{\mathcal{E}}}

\def\gL{{\mathcal{L}}}

\def\gN{{\mathcal{N}}}



\def\sR{{\mathbb{R}}}








\newcommand{\E}{\mathbb{E}}



\usepackage{color}
\usepackage{bm}
\usepackage{hyperref}
\usepackage{amsmath,amsfonts}
\usepackage{blindtext}
\usepackage{listings}
\usepackage{mathtools}
\usepackage[normalem]{ulem}
\usepackage{multicol}
\usepackage{multirow}
\usepackage{graphicx}
\usepackage{extarrows}


\newcommand{\SB}{\text{SB}}

\newcommand{\data}{\text{data}}
\newcommand{\prior}{\text{prior}}

\newcommand{\bridge}{\text{bridge}}

\DeclarePairedDelimiterX{\infdivx}[2]{(}{)}{%
	#1\;\delimsize\|\;#2%
}

\newcommand{\kl}{D_{\mathrm{KL}}\infdivx}

\newcommand{\vect}[1]{\bm{#1}}

\newcommand{\x}{\xv}

\newcommand{\dm}{\mathrm{d}}



\newcommand{\epsilonv}{\vect\epsilon}

\newcommand{\muv}{\vect\mu}

\newcommand{\av}{\vect a}
\newcommand{\bv}{\vect b}

\newcommand{\fv}{\vect f}

\newcommand{\sv}{\vect s}

\newcommand{\wv}{\vect w}
\newcommand{\xv}{\vect x}

\newcommand{\zv}{\vect z}

\newcommand{\Fv}{\vect F}

\newcommand{\Iv}{\vect I}

\newcommand{\Hc}{\mathcal H}

\newcommand{\Lc}{\mathcal L}

\newcommand{\Nc}{\mathcal N}

\newcommand{\Uc}{\mathcal U}










	


\begin{document}

\title{\textbf{Schrodinger Bridges Beat Diffusion Models on Text-to-Speech Synthesis}}
\author{Zehua Chen\thanks{Equal contribution; \quad $^\dagger$Corresponding author: \texttt{dcszj@tsinghua.edu.cn}}$^{\ 1}$ \ Guande He$^{*1}$ \ Kaiwen Zheng$^{*1}$ \  
Xu Tan$^{2}$ \ Jun Zhu$^{\dagger1}$
}
\date{
\small $^1$Dept. of Comp. Sci. \& Tech., Institute for AI, THU-Bosch Joint Center for ML, Tsinghua University\\
$^2$Microsoft Research Asia
}
\maketitle

\vspace{-0.4cm}
\begin{abstract}
In text-to-speech (TTS) synthesis, diffusion models have achieved promising generation quality. However, because of the pre-defined data-to-noise diffusion process, their prior distribution is restricted to a noisy representation, which provides little information of the generation target.
In this work, we present a novel TTS system, Bridge-TTS, making the first attempt to substitute the noisy Gaussian prior in established diffusion-based TTS methods with a clean and deterministic one, which provides strong structural information of the target.
Specifically, we leverage the latent representation obtained from text input as our prior, and build a fully tractable Schrodinger bridge between it and the ground-truth mel-spectrogram, leading to a data-to-data process. 
Moreover, the tractability and flexibility of our formulation allow us to empirically study the design spaces such as noise schedules, as well as to develop stochastic and deterministic samplers.
Experimental results on the LJ-Speech dataset illustrate the effectiveness of our method in terms of both synthesis quality and sampling efficiency, significantly outperforming our diffusion counterpart Grad-TTS in 50-step/1000-step synthesis and strong fast TTS models in few-step scenarios. Project page: \url{https://bridge-tts.github.io/}.
\end{abstract}

\begin{figure}[ht]
    \centering
    \includegraphics[width=0.79\textwidth]{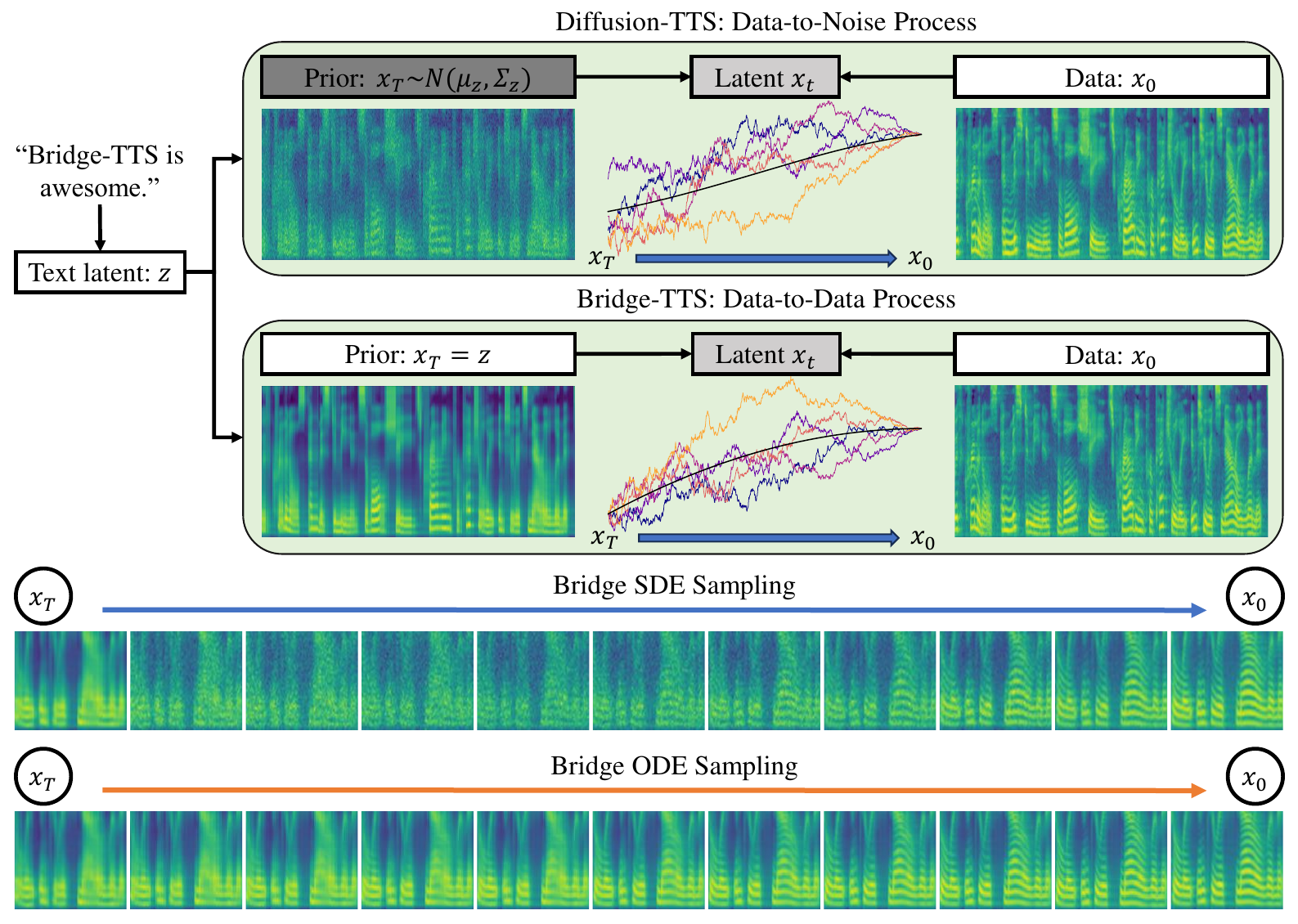}
    \caption{An overview of Bridge-TTS built on Schrodinger bridge.}
    \label{fig:flowchart}
\end{figure}

\newpage
\section{Introduction}
Diffusion models, including score-based generative models (SGMs) \citep{SGM} and denoising diffusion probabilistic models \citep{DDPM}, have been one of the most powerful generative models across different data generation tasks \citep{DALLE2,BinauralGrad,UniDiffuser,ProlificDreamer}. In speech community, they have been extensively studied in waveform synthesis \citep{DiffWave,WaveGrad,InferGrad}, text-to-audio generation \citep{AudioLDM,AudioLDM2,Make-An-Audio,Make-An-Audio2}, and text-to-speech (TTS) synthesis \citep{SurveyTan,Grad-TTS,NaturalSpeech2}. Generally, these models contain two processes between the data distribution and the prior distribution: 1) the forward diffusion process gradually transforms the data into a known prior distribution, e.g., Gaussian noise; 2) the reverse denoising process gradually generates data samples from the prior distribution.

In diffusion-based TTS systems \citep{Grad-TTS,LightGrad,CoMoSpeech}, the text input is usually first transformed into latent representation by a text encoder, which contains a phoneme encoder and a duration predictor, and then diffusion models are employed as a decoder to generate the mel-spectrogram conditioned on the latent. The prior distribution in these systems can be classified into two types: $1$) one is using the standard Gaussian noise to generate target \citep{ProDiff,DiffGAN-TTS,ResGrad}; $2$) the other improves the prior to be more informative of the target. For example, Grad-TTS \citep{Grad-TTS} learns the latent representation from the text encoder with the ground-truth target in training, and takes it as the mean of prior distribution to obtain a mean-shifted Gaussian. PriorGrad \citep{PriorGrad} utilizes the statistical values from training data, computing a Gaussian with covariance matrix. DiffSinger \citep{Diffsinger} employs an auxiliary model to acquire an intractable prior distribution, enabling a shallow reverse process. However, because diffusion models pre-specify the noise-additive diffusion process, the prior distribution of the above systems is confined to a noisy representation, which is not indicative of the mel-spectrogram.

In this work, as shown in Figure \ref{fig:flowchart}, we propose a new design to generate mel-spectrogram from a clean and deterministic prior, \textit{i.e.}, the text latent representation supervised by ground-truth target~\citep{Grad-TTS}. It has provided structural information of the target and is utilized as the condition information in both recent diffusion \citep{LightGrad,CoMoSpeech} and conditional flow matching \citep{VoiceFlow,MachaTTS} based TTS systems, while we argue that replacing the noisy prior in previous systems with this clean latent can further boost the TTS sample quality and inference speed. To enable this design, we leverage Schrodinger bridge~\citep{SB-1932,Diffusion-SB-Likelihood} instead of diffusion models, which seeks a data-to-data process rather than the data-to-noise process in diffusion models. As the original Schrodinger bridge is generally intractable that hinders the study of the design spaces in training and sampling, we propose a fully tractable Schrodinger bridge between paired data with a flexible form of reference SDE in alignment with diffusion models \citep{DDPM,SGM}.

With the tractability and flexibility of our proposed framework, aiming at TTS synthesis with superior generation quality and efficient sampling speed, we make an investigation of noise schedule, model parameterization, and training-free samplers, which diffusion models have greatly benefited from \citep{NoiseScheduleE2E,ProgressiveDis,DDIM}, while not been thoroughly studied in Schrodinger bridge related works. To summarize, we make the following key contributions in this work:

\begin{itemize}
    \item In TTS synthesis, we make the first attempt to generate the mel-spectrogram from clean text latent representation (\textit{i.e.}, the condition information in diffusion counterpart) by means of Schrodinger bridge, exploring data-to-data process rather than data-to-noise process.
    \item By proposing a fully tractable Schrodinger bridge between paired data with a flexible form of reference SDE, we theoretically elucidate and empirically explore the design spaces of noise schedule, model parameterization, and sampling process, further enhancing TTS quality with asymmetric noise schedule, data prediction, and first-order bridge samplers.
    \item Empirically, we attain both state-of-the-art generation quality and inference speed with a single training session. In both 1000-step and 50-step generation, we significantly outperform our diffusion counterpart Grad-TTS \citep{Grad-TTS};
    in 4-step generation, we accomplish higher quality than FastGrad-TTS \citep{FastGrad-TTS}; in 2-step generation, we surpass the state-of-the-art distillation method CoMoSpeech \citep{CoMoSpeech}, and the transformer-based model FastSpeech 2 \citep{Fastspeech2}.
\end{itemize}

\section{Background}

\subsection{Diffusion Models}
\label{sec:diffusionmodel}

Given a data distribution $p_{\data}(\boldsymbol{x})$, $\boldsymbol{x} \in \mathbb R^{d}$, SGMs \citep{SGM} are built on a continuous-time diffusion process defined by a forward stochastic differential equation (SDE):
\begin{align}
\label{eq:forwardSDE}
    \dm \x_t = \fv(\x_t, t) \dm t + g(t)\dm\wv_t,\quad \x_0 \sim p_0 = p_{\data}
\end{align}
where $t \in [0, T]$ for some finite horizon $T$, $\vf: \sR^d \times [0,T] \to \sR^d$ is a vector-valued drift term, $g:[0,T] \to \sR$ is a scalar-valued diffusion term, and $\wv_t \in \sR^d$ is a standard Wiener process. Under proper construction of $\fv,g$, the boundary distribution $p_T(\boldsymbol{x}_T)$ is approximately a Gaussian prior distribution $p_{\prior}=\Nc(\vect0,\sigma_T^2\Iv)$. The forward SDE has a corresponding reverse SDE~\citep{SGM} which shares the same marginal distributions $\{p_t\}_{t=0}^T$ with the forward SDE: 
\begin{align}
    \label{eq:reverseSDE}
    \dm\x_t = [\fv(\x_t, t) - g^2(t)\nabla \log p_t(\x_t)] \dm t + g(t)\dm \bar{\wv}_t,\quad \x_T\sim p_T\approx p_{\prior}
\end{align}
where $\bar{\boldsymbol{w}}_t$ is the reverse-time Wiener process, and the only unknown term $\nabla\log p_t(\boldsymbol x_t)$ is the \textit{score function} of the marginal density $p_t$. By parameterizing a score network $\sv_\theta(\x_t,t)$ to predict $\nabla\log p_t(\boldsymbol x_t)$, we can replace the true score in \eqref{eq:reverseSDE} and solve it reversely from $p_{\prior}$ at $t=T$, yielding generated data samples at $t=0$. $\sv_\theta(\x_t,t)$ is usually learned by the denoising score matching (DSM) objective~\citep{VincentScoreMatching,SGM} with a weighting function $\lambda(t)>0$:
\begin{align}
    \label{eq:DSM}
     \mathbb E_{p_0(\boldsymbol{x}_0)p_{t|0}(\boldsymbol{x}_t|\boldsymbol{x}_0)}
    \E_t\left[\lambda(t)\|s_{\theta}(\boldsymbol{x}_t, t)- \nabla \log p_{t|0}(\boldsymbol{x}_t|\boldsymbol{x}_0) \|_2^2\right],
\end{align}
where $t \sim \Uc(0, T)$ and $p_{t|0}$ is the conditional transition distribution from $\boldsymbol{x}_0$ to $\boldsymbol{x}_t$, which is determined by the pre-defined forward SDE and is analytical for a linear drift $\fv(\x_t,t)=f(t)\x_t$.

\subsection{Diffusion-Based TTS Systems}
The goal of TTS systems is to learn a generative model $p_{\theta}(\vx|y)$ over mel-spectrograms (Mel) $\vx \in \sR^d$ given conditioning text $y_{1:L}$ with length $L$. 
Grad-TTS \citep{Grad-TTS} provides a strong baseline for TTS with SGM, which consists of a text encoder and a diffusion-based decoder. 
Specifically, they alter the Gaussian prior in SGMs to another one $\tilde {p}_{\text{enc}}(\vz | y) = \gN (\vz, \Iv)$ with informative mean $\zv$, where $\vz \in \sR^d$ is a latent acoustic feature transformed from a text string $y$ through the text encoder network $\gE$, i.e., $\vz = \gE(y)$. 
The diffusion-based decoder utilizes $\tilde{p}_{\text{enc}}$ as prior for SGM and builds a diffusion process via the following modified forward SDE:
\begin{align}
    \label{eq:forwardSDE-GradTTS}
    \dm\boldsymbol{x}_t &= \tfrac{1}{2}(\boldsymbol{z} - \boldsymbol{x}_t)\beta_t \dm t + \sqrt{\beta_t} \dm\boldsymbol{w}_t,\quad \vx_0 \sim p_0 = p_\data(\vx|y)
\end{align}
where $p_0=p_\data(\vx|y)$ is the true conditional data distribution and $\beta_t$ is a non-negative noise schedule. The forward SDE in \eqref{eq:forwardSDE-GradTTS} will yield $\vx_T \sim p_T \approx \tilde p_{\text{enc}}$ with sufficient large $T$~\citep{Grad-TTS}. 
During training, the text encoder and the diffusion-based decoder are jointly optimized, where the encoder is optimized with a negative log-likelihood loss $\mathcal{L}_{\text{enc}} = -\E_{p_\data(\vx|y)}[\log \tilde{p}_{\text{enc}}(\vx|y)]$ and the decoder is trained with the DSM objective in \eqref{eq:DSM}, denoted as $\mathcal L_{\text{diff}}$. Apart from $\gL_{\text{enc}}$ and $\gL_{\text{diff}}$, the TTS system also optimizes a duration predictor $\hat A$ as a part of the encoder that predicts the alignment map $A^*$ between encoded text sequence $\tilde{\boldsymbol{z}}_{1:L}$ and the latent feature $\boldsymbol{z}_{1:F}$ with $F$ frames given by Monotonic Alignment Search \citep{kim2020glow}, where $\boldsymbol{z}_j = \tilde{\boldsymbol{z}}_{A^*(j)}$. Denote the duration prediction loss as $\gL_{\text{dp}}$, the overall training objective of Grad-TTS is $\mathcal {L}_{\text{grad-tts}} = \mathcal{L}_{\text{enc}} + \gL_{\text{dp}} + \mathcal{L}_{\text{diff}}$.

\subsection{Schrodinger Bridge}
\label{sec:Sbp}

The Schrodinger Bridge (SB) problem~\citep{SB-1932,Diffusion-SB-IPF, Diffusion-SB-Likelihood} originates from the optimization of path measures with constrained boundaries:
\begin{equation}
\label{eq:SB-DEF}
    \min_{p\in\mathcal P_{[0,T]}}\kl{p}{p^{\text{ref}}},\quad \textit{s.t.}\,\, p_0=p_{\data},p_T=p_{\prior}
\end{equation}
where $\mathcal P_{[0,T]}$ is the space of path measures on a finite time horizon $[0,T]$, $p^{\text{ref}}$ is the \textit{reference path measure}, and $p_0,p_T$ are the marginal distributions of $p$ at boundaries. Generally, $p^{\text{ref}}$ is defined by the same form of forward SDE as SGMs in~\eqref{eq:forwardSDE} (i.e., the \textit{reference SDE}). In such a case, the SB problem is equivalent to a couple of forward-backward SDEs~\citep{SB-DGM,Diffusion-SB-Likelihood}:
\begin{subequations}\label{eq:SB-SDE}
\begin{align}
    &\dm\x_t = [\fv(\x_t, t) + g^2(t) \nabla \log \Psi_t(\x_t) ] \dm t + g(t)\dm\wv_t, \quad \x_0\sim p_{\data} \label{eq:SB-forwardSDE} \\ 
    &\dm\x_t = [\fv(\x_t, t) - g^2(t) \nabla \log \widehat{\Psi}_t(\x_t) ] \dm t + g(t)\dm\bar{\wv}_t,\quad \x_T\sim p_{\prior} \label{eq:SB-reverseSDE}
\end{align}
\end{subequations}
where $\fv$ and $g$ are the same as in the reference SDE. The extra non-linear drift terms $\nabla \log \Psi_t(\boldsymbol x_t)$ and $\nabla \log \widehat{\Psi}_t(\boldsymbol x_t)$ are also described by the following coupled partial differential equations (PDEs):
\begin{equation}
\label{eq:SB-PDE}
\begin{cases}
    \frac{\partial \Psi}{\partial t}=-\nabla_{\boldsymbol{x}} \Psi^{\top} \boldsymbol{f}-\frac{1}{2} \operatorname{Tr}\left(g^2 \nabla_{\boldsymbol{x}}^2 \Psi\right) \\
    \frac{\partial \widehat{\Psi}}{\partial t}=-\nabla_{\boldsymbol{x}} \cdot(\widehat{\Psi} \boldsymbol{f})+\frac{1}{2} \operatorname{Tr}\left(g^2 \nabla_{\boldsymbol{x}}^2 \widehat{\Psi}\right)
\end{cases}
\quad\textit{s.t.}\,\, \Psi_0\widehat\Psi_0 = p_{\data}, \Psi_T\widehat\Psi_T = p_{\prior}.
\end{equation}
The marginal distribution $p_t$ of the SB at any time $t\in [0,T]$ satisfies $p_t=\Psi_t\widehat\Psi_t$. Compared to SGMs where $p_T\approx p_{\prior}=\Nc(\muv,\sigma_T^2\Iv)$, SB allows for a flexible form of $p_{\prior}$ and ensures the boundary condition $p_T=p_{\prior}$.
However, solving the SB requires simulating stochastic processes and performing costly iterative procedures~\citep{Diffusion-SB-IPF, Diffusion-SB-Likelihood,DSBM}. Therefore, it suffers from scalability and applicability issues. 
In certain scenarios, such as using paired data as boundaries, the SB problem can be solved in a simulation-free approach~\citep{AlignedDSB,I2SB}. 
Nevertheless, SBs in these works are either not fully tractable or limited to restricted families of $p^{\text{ref}}$, thus lacking a comprehensive and theoretical analysis of the design spaces.

\section{Bridge-TTS}
We extend SB techniques to the TTS task and elucidate the design spaces with theory-grounded analyses. We start with a fully tractable SB between paired data in TTS modeling. Based on such formulation, we derive different training objectives and theoretically study SB sampling in the form of SDE and ODE, which lead to novel first-order sampling schemes when combined with exponential integrators. In the following discussions, we say two probability density functions are the same when they are up to a normalizing factor. Besides, we assume the maximum time $T=1$ for convenience.
\subsection{Schrodinger Bridge between Paired Data}

As we have discussed, with the properties of unrestricted prior form and strict boundary condition, SB is a natural substitution for diffusion models when we have a strong informative prior. 
In the TTS task, the pairs of the ground-truth data $(\x,y)$ and the deterministic prior $\zv=\gE(y)$ given by the text encoder can be seen as mixtures of dual Dirac distribution boundaries $(\delta_{\x},\delta_{\zv})$, which simplifies the solving of SB problem. However, in such a case, the SB problem in~\eqref{eq:SB-DEF} will inevitably collapse given a stochastic reference process that admits a continuous density 
$p^{\text{ref}}_1$ at $t=1$, since the KL divergence between a Dirac distribution and a continuous probability measure is infinity.

To tackle this problem, we consider a noisy observation of boundary data points $\x_0,\x_1$ polluted by a small amount of Gaussian noise $\Nc(\vect 0,\epsilon_1^2\Iv)$ and $\Nc(\vect 0,\epsilon_2^2\Iv)$ respectively, which helps us to identify the SB formulation between clean data when $\epsilon_1,\epsilon_2\rightarrow 0$. Actually, we show that in general cases where the reference SDE has a linear drift $\fv(\x_t,t)=f(t)\x_t$ (which is aligned with SGMs), SB has a fully tractable and neat solution when $\epsilon_2=e^{\int_0^1f(\tau)\dm\tau}\epsilon_1$. We formulate the result in the following theorem.

\begin{proposition}[Tractable Schrodinger Bridge between Gaussian-Smoothed Paired Data with Reference SDE of Linear Drift, proof in Appendix~\ref{appendix:proof-tractable-sb}]
\label{prop:tractable-sb}
    Assume $\fv=f(t)\x_t$, the analytical solution to~\eqref{eq:SB-PDE} when $p_{\data}=\Nc(\x_0,\epsilon^2\Iv)$ and $p_{\prior}=\Nc(\x_1,e^{2\int_0^1f(\tau)\dm\tau}\epsilon^2\Iv)$ is
    \begin{equation}
\widehat{\Psi}_t^{\epsilon}=\Nc(\alpha_t\av,(\alpha_t^2\sigma^2+\alpha_t^2\sigma_t^2)\Iv),\quad \Psi_t^{\epsilon}=\Nc(\bar\alpha_t\bv,(\alpha_t^2\sigma^2+\alpha_t^2\bar\sigma_t^2)\Iv)
    \end{equation}
    where $t\in [0,1]$,
    \begin{equation}
        \av=\x_0+\frac{\sigma^2}{\sigma_1^2}(\x_0-\frac{\x_1}{\alpha_1}),\quad \bv=\x_1+\frac{\sigma^2}{\sigma_1^2}(\x_1-\alpha_1\x_0),\quad \sigma^2=\epsilon^2+\frac{\sqrt{\sigma_1^4+4\epsilon^4}-\sigma_1^2}{2},
    \end{equation}
    and
    \begin{equation}
\alpha_t=e^{\int_0^tf(\tau)\dm\tau},\quad\bar\alpha_t=e^{-\int_t^1f(\tau)\dm\tau},\quad\sigma_t^2 = \int_0^t \frac{g^2(\tau)}{\alpha_\tau^2} \dm\tau,\quad \bar\sigma_t^2 = \int_t^1 \frac{g^2(\tau)}{\alpha_\tau^2} \dm\tau.
    \end{equation}
\end{proposition}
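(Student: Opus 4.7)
The plan is to solve the PDE system in~\eqref{eq:SB-PDE} by exploiting its probabilistic interpretation. Under the linear reference drift $\fv=f(t)\x_t$, the PDE for $\widehat\Psi$ is the Fokker--Planck equation of the reference SDE, while the PDE for $\Psi$ is the corresponding backward Kolmogorov equation. Hence, $\widehat\Psi_t$ is (up to normalization) the pushforward of $\widehat\Psi_0$ by the reference SDE, and $\Psi_t(\x)=\Eb[\Psi_1(\x_1)\mid\x_t=\x]$ under the same SDE. Since the reference transition kernel
\begin{equation*}
p_{t\mid s}(\x_t\mid\x_s)=\N\!\left(\x_t;\tfrac{\alpha_t}{\alpha_s}\x_s,\,\alpha_t^2(\sigma_t^2-\sigma_s^2)\Iv\right)
\end{equation*}
is Gaussian, any Gaussian ansatz for $\widehat\Psi_0$ and $\Psi_1$ remains Gaussian at every intermediate time.

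First I would posit $\widehat\Psi_0=\N(\av,\sigma^2\Iv)$ and $\Psi_1=\N(\bv,\alpha_1^2\sigma^2\Iv)$, with the $\alpha_1^2$ factor chosen so that the two propagated variances agree in functional form at the interior. Forward evolution of $\widehat\Psi_0$ gives $\widehat\Psi_t=\N(\alpha_t\av,(\alpha_t^2\sigma^2+\alpha_t^2\sigma_t^2)\Iv)$ immediately. For $\Psi_t$, I would evaluate the Gaussian convolution $\int p_{1\mid t}(\x_1\mid\x)\Psi_1(\x_1)\,\dm\x_1$, complete the square in $\x$, and pick up $\Psi_t=\N(\bar\alpha_t\bv,(\alpha_t^2\sigma^2+\alpha_t^2\bar\sigma_t^2)\Iv)$, which is exactly the claimed form.

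Second I would pin down $(\av,\bv,\sigma^2)$ from the boundary constraints $\Psi_0\widehat\Psi_0=p_{\data}$ and $\Psi_1\widehat\Psi_1=p_{\prior}$. Products of isotropic Gaussians combine by adding precisions and precision-weighted means, so each endpoint yields one scalar variance equation plus one vector mean equation. The key observation is that, under the hypothesis $p_{\prior}=\N(\x_1,\alpha_1^2\epsilon^2\Iv)$, \emph{both} variance equations reduce to the single identity
\begin{equation*}
\frac{1}{\epsilon^2}=\frac{1}{\sigma^2}+\frac{1}{\sigma^2+\sigma_1^2},
\end{equation*}
and solving this quadratic in $\sigma^2$ (keeping the positive root) gives the stated closed form. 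Writing $A=1/\sigma^2$, $B=1/(\sigma^2+\sigma_1^2)$, the two mean equations form a $2\times 2$ linear system in $(\av,\bv)$ with determinant $A^2-B^2\neq 0$; direct inversion and the identity $A-B=\sigma_1^2/(\sigma^2(\sigma^2+\sigma_1^2))$ simplify the solution to $\av=\x_0+(\sigma^2/\sigma_1^2)(\x_0-\x_1/\alpha_1)$ and $\bv=\x_1+(\sigma^2/\sigma_1^2)(\x_1-\alpha_1\x_0)$.

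The main obstacle is less analytic than bookkeeping: one must confirm that the two endpoint variance conditions are simultaneously solvable, which is precisely where the compatibility $\epsilon_2=\alpha_1\epsilon_1=e^{\int_0^1 f(\tau)\dm\tau}\epsilon_1$ is needed---without this symmetry, the two endpoints would impose inconsistent quadratics on $\sigma^2$. Once variance consistency is in hand, the remaining work is routine Gaussian algebra. As a consistency check one may substitute the derived $(\widehat\Psi_t,\Psi_t)$ back into~\eqref{eq:SB-PDE}, but this is automatic since propagation via the reference transition kernel is by construction a solution of the respective PDEs.
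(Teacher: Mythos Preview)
Your proposal is correct and follows essentially the same route as the paper's proof: both interpret the coupled PDEs probabilistically (Fokker--Planck for $\widehat\Psi$, backward Kolmogorov for $\Psi$), posit the Gaussian ansatz $\widehat\Psi_0=\Nc(\av,\sigma^2\Iv)$, $\Psi_1=\Nc(\bv,\alpha_1^2\sigma^2\Iv)$, propagate through the Gaussian reference transition kernel to obtain the claimed marginals, and then solve for $(\av,\bv,\sigma^2)$ from the two boundary products. The only cosmetic difference is that the paper obtains $\Psi_t$ via a time-reversal symmetry argument rather than your explicit Feynman--Kac convolution, and it solves the endpoint system directly via the Gaussian product formula rather than through precisions; your emphasis on why the compatibility $\epsilon_2=\alpha_1\epsilon_1$ is needed for the two variance equations to coincide is a nice clarification that the paper leaves implicit.
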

In the above theorem, $\alpha_t,\bar\alpha_t,\sigma_t,\bar\sigma_t$ are determined by $f,g$ in the reference SDE (\eqref{eq:forwardSDE}) and are analogous to the \textit{noise schedule} in SGMs~\citep{VariationalDiffusion}. 
When $\epsilon\rightarrow 0$, $\widehat{\Psi}_t^{\epsilon},\Psi_t^{\epsilon}$ converge to the tractable solution between clean paired data $(\x_0,\x_1)$:
\begin{equation}
\label{eq:tractable_psi_limit}\widehat{\Psi}_t=\Nc(\alpha_t\x_0,\alpha_t^2\sigma_t^2\Iv),\quad \Psi_t=\Nc(\bar\alpha_t\x_1,\alpha_t^2\bar\sigma_t^2\Iv)
\end{equation}
The advantage of such tractability lies in its ability to facilitate the study of training and sampling under the forward-backward SDEs (\eqref{eq:SB-SDE}), which we will discuss in the following sections. Besides, the marginal distribution $p_t=\widehat{\Psi}_t\Psi_t$ of the SB  also has a tractable form: 
\begin{equation}
\label{eq:tractable_marginal}
p_t=\Psi_t\widehat\Psi_t=\Nc\left(\frac{\alpha_t\bar\sigma_t^2\x_0+\bar\alpha_t\sigma_t^2\x_1}{\sigma_1^2},\frac{\alpha_t^2\bar\sigma_t^2\sigma_t^2}{\sigma_1^2}\Iv\right), 
\end{equation}
which is a Gaussian distribution whose mean is an interpolation between $\x_0,\x_1$, and variance is zero at boundaries and positive at the middle. A special case is that, when the noise schedule $f(t)=0$ and $g(t)=\sigma>0$, we have $p_t=\Nc((1-t)\x_0+t\x_1,\sigma^2t(1-t)\Iv)$, which recovers the Brownian bridge used in previous works~\citep{SE-Bridge,tong2023simulation,tong2023improving}. Actually, \eqref{eq:tractable_marginal} reveals the form of \textit{generalized Brownian bridge with linear drift and time-varying volatility} between $\x_0$ and $\x_1$. We put the detailed analysis in Appendix~\ref{appendix:brownian-bridge}.

\subsection{Model Training}
\label{sec:model_training}

The TTS task aims to learn a model to generate the Mel $\x_0$ given text $y$. Denote $\x_1=\mathcal{E}(y)$ as the latent acoustic feature produced by text encoder $\mathcal{E}$, since the SB is tractable given $\x_0,\x_1$ ($\nabla\log\Psi,\nabla\log\widehat\Psi$ in~\eqref{eq:SB-SDE} are determined by~\eqref{eq:tractable_psi_limit}), a direct training approach is to parameterize a network $\x_\theta$ to predict $\x_0$ given $\x_t$ at different timesteps, which allows us to simulate the process of SB from $t=1$ to $t=0$. This is in alignment with the \textit{data prediction} in diffusion models, and we have the bridge loss:
\begin{equation}
    \label{eq:data-predictor}
    \Lc_{\bridge}=\E_{(\x_0,y)\sim p_{\data},\x_1=\mathcal{E}(y)}\E_t[\|\x_\theta(\x_t,t,\x_1)-\x_0\|_2^2]
\end{equation}
where $\x_t=\frac{\alpha_t\bar\sigma_t^2}{\sigma_1^2}\x_0+\frac{\bar\alpha_t\sigma_t^2}{\sigma_1^2}\x_1+\frac{\alpha_t\bar\sigma_t\sigma_t}{\sigma_1}\epsilonv,\epsilonv\sim\Nc(\vect0,\Iv)$ by the SB (\eqref{eq:tractable_marginal}). $\x_1$ is also fed into the network as a condition, following Grad-TTS~\citep{Grad-TTS}.

Analogous to the different parameterizations in diffusion models, there are alternative choices of training objectives that are equivalent in bridge training, such as the noise prediction corresponding to $\nabla\log \widehat\Psi_t$~\citep{I2SB} or the SB score $\nabla\log p_t$, and the velocity prediction related to flow matching techniques~\citep{FlowMatching}. However, we find they perform worse or poorly in practice, which we will discuss in detail in Appendix~\ref{appendix:training-objective}. 
Except for the bridge loss, we jointly train the text encoder $\mathcal{E}$ (including the duration predictor $\hat A$) following Grad-TTS. Since the encoder no longer parameterizes a Gaussian distribution, we simply adopt an MSE encoder loss $\mathcal L^\prime_{\text{enc}} = \E_{(\x_0,y)\sim p_{\data}}\| \mathcal{E}(y) - \x_0 \|^2$. And we use the same duration prediction loss $\gL_{\text{dp}}$ as Grad-TTS.
The overall training objective of Bridge-TTS is $\mathcal {L}_{\text{bridge-tts}} = \mathcal{L}_{\text{enc}}^\prime + \gL_{\text{dp}} +  \mathcal{L}_{\text{bridge}}$.

In our framework, the flexible form of reference SDE facilitates the design of noise schedules $f,g$, which constitutes an important factor of performance as in SGMs. In this work, we directly transfer the well-behaved noise schedules from SGMs, such as variance preserving (VP). As shown in Table~\ref{tab:noise_schedule}, we set $f,g^2$ linear to $t$, and the corresponding $\alpha_t,\sigma_t^2$ have closed-form expressions. Such designs are new in both SB and TTS-related contexts and distinguish our work from previous ones with Brownian bridges~\citep{SE-Bridge,tong2023simulation,tong2023improving}.

\begin{table}[H]
\caption{\label{tab:noise_schedule}Demonstration of the noise schedules in Bridge-TTS.}
\label{mspeed}
\small
\begin{center}
\resizebox{0.98\linewidth}{!}{
\begin{tabular}{c|cccc}
\toprule 
Schedule  & $f(t)$ & $g^2(t)$ & $\alpha_t$ & $\sigma^2_t$   \\
\midrule
Bridge-gmax\tablefootnote{The main hyperparameter for the Bridge-gmax schedule is $\beta_1$, which is exactly the maximum of $g^2(t)$.} & 0 & $\beta_0 + t(\beta_1 - \beta_0)$ & 1 & $\frac{1}{2}(\beta_1 - \beta_{0})t^2 + \beta_{0}t$  \\ \addlinespace[4pt]
Bridge-VP & $-\frac{1}{2}(\beta_0 + t(\beta_1 - \beta_0))$ & $\beta_0 + t(\beta_1 - \beta_0)$ & $e^{-\frac{1}{2} \int_0^t (\beta_{0} + \tau(\beta_{1} - \beta_{0}))\dm\tau}$ & $e^{\int_0^t (\beta_0 + \tau(\beta_1 - \beta_0))\dm\tau} - 1$ \\
\bottomrule
\end{tabular}
}
\end{center}
\end{table}

\subsection{Sampling Scheme}
Assume we have a trained data prediction network $\x_\theta(\x_t,t)$ \footnote{We omit the condition $\x_1$ for simplicity and other parameterizations such as noise prediction can be first transformed to $\x_\theta$.}. If we replace $\x_0$ with $\x_\theta$ in the tractable solution of $\widehat\Psi,\Psi$ (\eqref{eq:tractable_psi_limit}) and substitute them into~\eqref{eq:SB-SDE}, which describes the SB with SDEs, we can obtain the parameterized SB process. 
Analogous to the sampling in diffusion models, the parameterized SB can be described by both stochastic and deterministic processes, which we call bridge SDE/ODE, respectively. 
\paragraph{Bridge SDE}
We can follow the reverse SDE in~\eqref{eq:SB-reverseSDE}. By substituting~\eqref{eq:tractable_psi_limit} into it and replace $\x_0$ with $\x_\theta$, we have the \textit{bridge SDE}:
 \begin{equation}
 \label{eq:bridge-SDE}
\dm\x_t=\left[f(t)\x_t+g^2(t)\frac{\x_t-\alpha_t\x_\theta(\x_t,t)}{\alpha_t^2\sigma_t^2}\right]\dm t+g(t)\dm\bar\wv_t
 \end{equation}
\paragraph{Bridge ODE}
The \textit{probability flow ODE}~\citep{SGM} of the forward SDE in~\eqref{eq:SB-forwardSDE} is~\citep{Diffusion-SB-Likelihood}:
    \begin{equation}
    \begin{aligned}
       \dm\boldsymbol{x}_t &= \left[\fv(t)\x_t + g^2(t) \nabla \log \Psi_t(\x_t)-\frac{1}{2}g^2(t) \nabla \log p_t(\x_t)\right] \dm t\\
        &=\left[\fv(t)\x_t + \frac{1}{2}g^2(t) \nabla \log \Psi_t(\boldsymbol x_t)-\frac{1}{2}g^2(t) \nabla \log \widehat\Psi_t(\boldsymbol x_t)\right] \dm t
    \end{aligned}
    \end{equation}

where we have used $\nabla \log p_t(\x_t)=\nabla\log \Psi_t(\x_t)+\nabla\log \widehat\Psi_t(\x_t)$ since $p_t=\Psi_t\widehat\Psi_t$. By substituting~\eqref{eq:tractable_psi_limit} into it and replace $\x_0$ with $\x_\theta$, we have the \textit{bridge ODE}:
 \begin{equation}
  \label{eq:bridge-ODE}
     \dm\boldsymbol{x}_t = \left[\fv(t)\x_t - \frac{1}{2}g^2(t) \frac{\x_t-\bar\alpha_t\x_1}{\alpha_t^2\bar\sigma_t^2}+\frac{1}{2}g^2(t)\frac{\x_t-\alpha_t\x_\theta(\x_t,t)}{\alpha_t^2\sigma_t^2}\right] \dm t
 \end{equation}
To obtain data sample $\xv_0$, we can solve the bridge SDE/ODE from the latent $\x_1$ at $t=1$ to $t=0$. However, directly solving the bridge SDE/ODE may cause large errors when the number of steps is small. A prevalent technique in diffusion models is to handle them with exponential integrators~\citep{DPM-Solver,DPM-Solver++,zheng2023dpm,SEEDS}, which aims to ``cancel'' the linear terms involving $\x_t$ and obtain solutions with lower discretization error. We conduct similar derivations for bridge sampling, and present the results in the following theorem.
\begin{proposition}[Exact Solution and First-Order Discretization of Bridge SDE/ODE, proof in Appendix~\ref{appendix:proof-sb-sampling}]
\label{prop:sampler}
Given an initial value $\x_s$ at time $s>0$, the solution at time $t\in [0,s]$ of bridge SDE/ODE is
\begin{align}
\label{eq:exact-solution-SDE}
    \x_t&=\frac{\alpha_t\sigma_t^2}{\alpha_s\sigma_s^2}\x_s-\alpha_t\sigma_t^2\int_s^t\frac{g^2(\tau)}{\alpha_\tau^2\sigma_\tau^4}\x_\theta(\x_\tau,\tau)\dm\tau+\alpha_t\sigma_t\sqrt{1-\frac{\sigma_t^2}{\sigma_s^2}}\epsilonv,\quad \epsilonv\sim\Nc(\vect 0,\Iv)\\
    \label{eq:exact-solution-ODE}
    \x_t&=\frac{\alpha_t\sigma_t\bar\sigma_t}{\alpha_s\sigma_s\bar\sigma_s}\x_s+\frac{\bar\alpha_t\sigma_t^2}{\sigma_1^2}\left(1-\frac{\sigma_s\bar\sigma_t}{\bar\sigma_s\sigma_t}\right)\x_1-\frac{\alpha_t\sigma_t\bar\sigma_t}{2}\int_s^t\frac{g^2(\tau)}{\alpha_\tau^2\sigma_\tau^3\bar\sigma_\tau}\x_\theta(\x_\tau,\tau)\dm\tau
\end{align}
  The first-order discretization (with the approximation $\x_\theta(\x_\tau,\tau)\approx \x_\theta(\x_s,s)$ for $\tau\in [t,s]$) gives
  \begin{align}
      \label{eq:first-order-SDE}
    \x_t&=\frac{\alpha_t\sigma_t^2}{\alpha_s\sigma_s^2}\x_s+\alpha_t\left(1-\frac{\sigma_t^2}{\sigma_s^2}\right)\x_\theta(\x_s,s)+\alpha_t\sigma_t\sqrt{1-\frac{\sigma_t^2}{\sigma_s^2}}\epsilonv,\quad \epsilonv\sim\Nc(\vect 0,\Iv)\\
     \label{eq:first-order-ODE}
\x_t&=\frac{\alpha_t\sigma_t\bar\sigma_t}{\alpha_s\sigma_s\bar\sigma_s}\x_s+\frac{\alpha_t}{\sigma_1^2}\left[\left(\bar\sigma_t^2-\frac{\bar\sigma_s\sigma_t\bar\sigma_t}{\sigma_s}\right)\x_\theta(\x_s,s)+\left(\sigma_t^2-\frac{\sigma_s\sigma_t\bar\sigma_t}{\bar\sigma_s}\right)\frac{\x_1}{\alpha_1}\right]
  \end{align}
\end{proposition}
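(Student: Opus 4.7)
The plan is to apply the \emph{exponential integrator} technique, familiar from recent diffusion ODE/SDE solvers, to cancel the linear-in-$\x_t$ drift in both the bridge SDE \eqref{eq:bridge-SDE} and the bridge ODE \eqref{eq:bridge-ODE}, leaving only integrals in $\x_\theta$ and $\x_1$ that can either be evaluated in closed form (for the exact statement) or approximated by freezing $\x_\theta$ at the initial time $s$ (for the first-order scheme).

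First I would rewrite \eqref{eq:bridge-SDE} as
\[\dm\x_t = \Big(f(t) + \tfrac{g^2(t)}{\alpha_t^2\sigma_t^2}\Big)\x_t\,\dm t - \tfrac{g^2(t)}{\alpha_t\sigma_t^2}\x_\theta(\x_t,t)\,\dm t + g(t)\,\dm\bar{\wv}_t,\]
and observe, using $\tfrac{\dm\log\alpha_t}{\dm t}=f(t)$ and $\tfrac{\dm\sigma_t^2}{\dm t}=g^2(t)/\alpha_t^2$ (immediate from the definitions in Proposition~\ref{prop:tractable-sb}), that the linear coefficient collapses to $\tfrac{\dm\log(\alpha_t\sigma_t^2)}{\dm t}$. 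The integrating factor $\alpha_s\sigma_s^2/(\alpha_t\sigma_t^2)$ together with variation of constants then yields \eqref{eq:exact-solution-SDE}. The stochastic integral is Gaussian with variance $(\alpha_t\sigma_t^2)^2\int_t^s g^2(\tau)/(\alpha_\tau^2\sigma_\tau^4)\,\dm\tau$ by It\^o isometry; the antiderivative $\tfrac{\dm}{\dm\tau}(1/\sigma_\tau^2)=-g^2(\tau)/(\alpha_\tau^2\sigma_\tau^4)$ evaluates this to $\alpha_t^2\sigma_t^2(1-\sigma_t^2/\sigma_s^2)$. Freezing $\x_\theta(\x_\tau,\tau)\approx\x_\theta(\x_s,s)$ and reusing the same antiderivative collapses the remaining integral to $\alpha_t(1-\sigma_t^2/\sigma_s^2)\x_\theta(\x_s,s)$, giving \eqref{eq:first-order-SDE}.

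For the ODE, the linear coefficient of $\x_t$ in \eqref{eq:bridge-ODE} is $f(t)-\tfrac{g^2(t)}{2\alpha_t^2\bar\sigma_t^2}+\tfrac{g^2(t)}{2\alpha_t^2\sigma_t^2}$. Combined with $\tfrac{\dm\bar\sigma_t^2}{\dm t}=-g^2(t)/\alpha_t^2$, this simplifies to $\tfrac{\dm\log(\alpha_t\sigma_t\bar\sigma_t)}{\dm t}$, so the integrating factor is $\alpha_s\sigma_s\bar\sigma_s/(\alpha_t\sigma_t\bar\sigma_t)$. Variation of constants then writes $\x_t$ as $\tfrac{\alpha_t\sigma_t\bar\sigma_t}{\alpha_s\sigma_s\bar\sigma_s}\x_s$ plus two driving integrals, one in $\x_1$ and one in $\x_\theta$. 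The $\x_1$ integral requires the key identity $\tfrac{\dm}{\dm\tau}\big(\sigma_\tau/\bar\sigma_\tau\big)=\tfrac{g^2(\tau)\sigma_1^2}{2\alpha_\tau^2\sigma_\tau\bar\sigma_\tau^3}$, which follows from the quotient rule together with the constant-sum relation $\sigma_\tau^2+\bar\sigma_\tau^2=\sigma_1^2$ (visible directly from the definitions of $\sigma_t,\bar\sigma_t$). Plugging it in and using $\bar\alpha_t=\alpha_t/\alpha_1$ produces the closed-form $\x_1$ coefficient in \eqref{eq:exact-solution-ODE}, while the $\x_\theta$ integral is already in canonical form.

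For the first-order ODE step, freezing $\x_\theta$ at $s$ reduces its integral via the symmetric identity $\tfrac{\dm}{\dm\tau}\big(\bar\sigma_\tau/\sigma_\tau\big)=-\tfrac{g^2(\tau)\sigma_1^2}{2\alpha_\tau^2\bar\sigma_\tau\sigma_\tau^3}$, which cleans up to the $\bar\sigma_t^2-\tfrac{\bar\sigma_s\sigma_t\bar\sigma_t}{\sigma_s}$ factor in \eqref{eq:first-order-ODE}. I expect the main obstacle to be discovering and verifying these two telescoping identities for $\sigma_\tau/\bar\sigma_\tau$ and $\bar\sigma_\tau/\sigma_\tau$; without the cancellation afforded by $\sigma_\tau^2+\bar\sigma_\tau^2=\sigma_1^2$, the $\x_1$ and $\x_\theta$ integrals for the ODE admit no elementary antiderivative, and all other steps reduce to routine bookkeeping around the integrating factor.
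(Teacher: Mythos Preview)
Your proposal is correct and follows essentially the same route as the paper: exponential integrator / variation of constants, the same identification of the linear coefficients as $\frac{\dm}{\dm t}\log(\alpha_t\sigma_t^2)$ and $\frac{\dm}{\dm t}\log(\alpha_t\sigma_t\bar\sigma_t)$, and the same reliance on $\sigma_\tau^2+\bar\sigma_\tau^2=\sigma_1^2$ to collapse the ODE integrals. The only cosmetic difference is that the paper evaluates the ODE integrals via the trigonometric substitution $\theta_\tau=\arctan(\sigma_\tau/\bar\sigma_\tau)$ and ends up at $\tan\theta_t-\tan\theta_s$, whereas you directly verify by the quotient rule that $\sigma_\tau/\bar\sigma_\tau$ (respectively $\bar\sigma_\tau/\sigma_\tau$) is the antiderivative---these are the same computation.
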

To the best of our knowledge, such derivations are revealed for the first time in the context 
of SB. We find that the first-order discretization of bridge SDE (\eqref{eq:first-order-SDE}) recovers posterior sampling~\citep{I2SB} on a Brownian bridge, and the first-order discretization of bridge ODE (\eqref{eq:first-order-ODE}) in the limit of $\frac{\sigma_s}{\sigma_1},\frac{\sigma_t}{\sigma_1}\rightarrow 0$ recovers deterministic DDIM sampler~\citep{DDIM} in diffusion models. Besides, we can easily discover that the 1-step case of~\eqref{eq:first-order-SDE} and~\eqref{eq:first-order-ODE} are both 1-step deterministic prediction by $\x_\theta$. We put more detailed analyses in Appendix~\ref{appendix:posterior-sampling}.

We can also develop higher-order samplers by taking higher-order Taylor expansions for $\x_\theta$ in the exact solutions. We further discuss and take the predictor-corrector method as the second-order case in Appendix~\ref{appendix:high-order-sampler}. In practice, we find first-order sampler is enough for the TTS task, and higher-order samplers do not make a significant difference.

\section{Experiments}
\subsection{Training Setup}

\paragraph{Data} We utilize the LJ-Speech dataset \citep{ljspeech17}, which contains $13,100$ samples, around $24$ hours in total, from a female speaker at a sampling rate of $22.05$ kHz. The test samples are extracted from both LJ-$001$ and LJ-$002$, and the remaining $12577$ samples are used for training. We follow the common practice, using the open-source tools \citep{g2pE2019} to convert the English grapheme sequence to phoneme sequence, and extracting the $80$-band mel-spectrogram with the FFT $1024$ points, $80$Hz and $7600$Hz lower and higher frequency cutoffs, and a hop length of $256$.

\paragraph{Model training} To conduct a fair comparison with diffusion models, we adopt the same network architecture and training settings used in Grad-TTS \citep{Grad-TTS}: 1) the encoder (\textit{i.e.}, text encoder and duration predictor) contains $7.2$M parameters and the U-Net based decoder contains $7.6$M parameters; 2) the model is trained with a batch size of $16$, and $1.7$M iterations in total on a single NVIDIA RTX $3090$, using $2.5$ days; 3) the Adam optimizer \citep{Adam} is employed with a constant learning rate of $0.0001$.
For noise schedules, we set $\beta_0 = 0.01, \beta_1 = 20$ for Bridge-VP (exactly the same as VP in SGMs) and $\beta_0 = 0.01, \beta_1 = 50$ for Bridge-gmax.

\paragraph{Evaluation} Following previous works \citep{Grad-TTS,Diffsinger,ProDiff}, we conduct the subjective tests MOS (Mean Opinion Score) and CMOS (Comparison Mean Opinion Score) to evaluate the overall subjective quality and comparison sample quality, respectively. To guarantee the reliability of the collected results, we use the open platform \textit{Amazon Mechanical Turk}, and require Master workers to complete the listening test. Specifically, the MOS scores of 20 test samples are given by 25 Master workers to evaluate the overall performance with a 5-point scale, where $1$ and $5$ denote the lowest (``Bad") and highest (``Excellent") quality respectively. The result is reported with a 95\% confidence interval. Each CMOS score is given by 15 Master workers to compare 20 test samples synthesized by two different models. More details of CMOS test can be visited in Appendix \ref{additionalresults}. In both MOS and CMOS tests, each of the test samples has been normalized for a fair comparison\footnote{\url{https://github.com/slhck/ffmpeg-normalize}}. To measure the inference speed, we calculate the real-time factor (RTF) on an NVIDIA RTX $3090$. 

\subsection{Results and Analyses}
We demonstrate the performance of Bridge-TTS on sample quality and inference speed separately, which guarantees a more precise comparison between multiple models. In Table \ref{mquality} and Table \ref{mspeed}, the test samples in LJ-Speech dataset are denoted as \textit{Recording}, the samples synthesized from ground-truth mel-spectrogram by vocoder are denoted as \textit{GT-Mel+voc.}, and the number of function evaluations is denoted as \textit{NFE}. We take the pre-trained HiFi-GAN \citep{HiFi-GAN}\footnote{\url{https://github.com/jik876/hifi-gan}} as the vocoder, aligned with other baseline settings. More details of baseline models are introduced in Appendix \ref{Baselinemodels}. 
In the sampling process of both tests, Grad-TTS employs ODE sampling and sets the prior distribution $p_T = \mathcal N(\boldsymbol z, \tau_d^{-1}\boldsymbol I)$ with a temperature parameter $\tau_d=1.5$. In Bridge-TTS, we use our first-order SDE sampler shown in~\eqref{eq:first-order-SDE} with a temperature parameter $\tau_b=2$ for the noise distribution $\epsilon = \mathcal N(\boldsymbol 0, \tau_b^{-1}\boldsymbol I)$, which is helpful to the TTS quality in our observation.

\noindent
\begin{minipage}[t]{\textwidth}
\small
\begin{minipage}[t]{0.47\textwidth}
\makeatletter\def\@captype{table}
\caption{The MOS comparison with 95\% confidence interval given numerous sampling steps.
}
\label{mquality}
\begin{center}
\begin{tabular}{l|ccc}
\toprule 
Model  & NFE & RTF ($\downarrow$) & MOS ($\uparrow$)  \\
\midrule
    Recording & / & / & 4.10 $\pm$ 0.06   \\
    GT-Mel + voc. & / & / & 3.93 $\pm$ 0.07 \\
\midrule
    FastSpeech 2 & 1 & 0.004 & 3.78 $\pm$ 0.07 \\
    VITS & 1 & 0.018 & 3.99 $\pm$ 0.07 \\
\midrule
    DiffSinger & 71 & 0.157 & 3.92 $\pm$ 0.06 \\
    ResGrad & 50 & 0.135 & 3.97 $\pm$ 0.07 \\
    Grad-TTS & 50 & 0.116 & 3.99 $\pm$ 0.07 \\
    \textbf{Ours (VP)} & 50 & 0.117 & \textbf{4.09 $\pm$ 0.07} \\
    \textbf{Ours (gmax)} & 50 & 0.117 & \textbf{4.07 $\pm$ 0.07} \\
\midrule
    Grad-TTS & 1000 & 2.233 & 3.98 $\pm$ 0.07 \\
    \textbf{Ours (VP)} & 1000 & 2.267 & \textbf{4.05 $\pm$ 0.07}  \\
    \textbf{Ours (gmax)} & 1000 & 2.267 & \textbf{4.07 $\pm$ 0.07}  \\
\bottomrule
\end{tabular}
\end{center}
\end{minipage}
\hspace{0.04\textwidth}
\begin{minipage}[t]{0.47\textwidth}
\makeatletter\def\@captype{table}
\caption{The MOS comparison with 95\% confidence interval in few-step generation.
}
\label{mspeed}
\small
\begin{center}
\vspace{1pt}
\begin{tabular}{l|ccc}
\toprule 
Model  & NFE & RTF ($\downarrow$) & MOS ($\uparrow$)  \\
\midrule
    Recording & / & / & 4.12 $\pm$ 0.06 \\
    GT-Mel + voc. & / & / & 4.01 $\pm$ 0.06 \\
\midrule
    FastSpeech 2 & 1 & 0.004 & 3.84 $\pm$ 0.07  \\
    CoMoSpeech & 1 & 0.007 & 3.74 $\pm$ 0.07 \\
\midrule    
    ProDiff & 2 & 0.019 & 3.67 $\pm$ 0.07 \\
    CoMoSpeech & 2 & 0.009 & 3.87 $\pm$ 0.07 \\
    \textbf{Ours (gmax)} & 2 & 0.009 & \textbf{4.04 $\pm$ 0.06} \\
\midrule
    DiffGAN-TTS & 4 & 0.014 & 3.78 $\pm$ 0.07\\
    Grad-TTS & 4 & 0.013 & 3.88 $\pm$ 0.07 \\
    FastGrad-TTS & 4 & 0.013 & 3.87 $\pm$ 0.07 \\
    ResGrad & 4 & 0.017 & 4.02 $\pm$ 0.06\\
    \textbf{Ours (gmax)} & 4 & 0.013 & \textbf{4.10 $\pm$ 0.06} \\    
\bottomrule
\end{tabular}
\end{center}
\end{minipage}
\end{minipage}

\paragraph{Generation quality}
Table \ref{mquality} compares the generation quality between Bridge-TTS and previous TTS systems. As shown, both Bridge-TTS models outperform three strong diffusion-based TTS systems: our diffusion counterpart Grad-TTS \citep{Grad-TTS}, the shallow diffusion model DiffSinger \citep{Diffsinger} and the residual diffusion model ResGrad \citep{ResGrad}. In comparison with the transformer-based model FastSpeech 2 \citep{Fastspeech2} and the end-to-end TTS system \citep{VITS}, we also exhibit stronger subjective quality. When NFE is either 1000 or 50, our Bridge-TTS achieves superior quality. One reason is that the condition information (\textit{i.e.}, text encoder output) in TTS synthesis is strong, and the other is that our first-order Bridger sampler maintains the sample quality when reducing the NFE.

\paragraph{Sampling speed}
Table \ref{mspeed} shows the evaluation of sampling speed with the Bridge-TTS-gmax model, as we observe that it achieves higher quality than the VP-based Bridge-TTS system. To conduct a fair comparison, we choose the NFE reported in the baseline models. As shown, in $4$-step sampling, we not only outperform our diffusion counterpart Grad-TTS \citep{Grad-TTS}, FastGrad-TTS \citep{FastGrad-TTS} using a first-order SDE sampler, and DiffGAN-TTS \citep{DiffGAN-TTS} by a large margin, but also achieve higher quality than ResGrad \citep{ResGrad} which stands on a pre-trained FastSpeech 2 \citep{Fastspeech2}. In 2-step sampling with a RTF of 0.009, we achieve higher quality than the state-of-the-art fast sampling method CoMoSpeech \citep{CoMoSpeech}. In comparison with 1-step method, FastSpeech 2 and CoMoSpeech, although our 2-step generation is slightly slower, we achieve distinctively better quality.

\subsection{Case Study}
We show a sample when NFE=4 in Figure \ref{fig:samplecomparison} (a), using our first-order ODE sampler shown in Eqn (\ref{eq:first-order-ODE}). As shown, Bridge-TTS clearly generates more details of the target than the diffusion counterpart Grad-TTS ($\tau_d=1.5$). Moreover, we show a 2-step ODE sampling trajectory of Bridge-TTS in Figure \ref{fig:samplecomparison} (b). As shown, with our data-to-data generation process, each sampling step is adding more details to refine the prior which has provided strong information about the target. 
More generated samples can be visited in Appendix \ref{generatedsamples}.

\begin{figure}[ht]
\centering
\includegraphics[width=\linewidth]{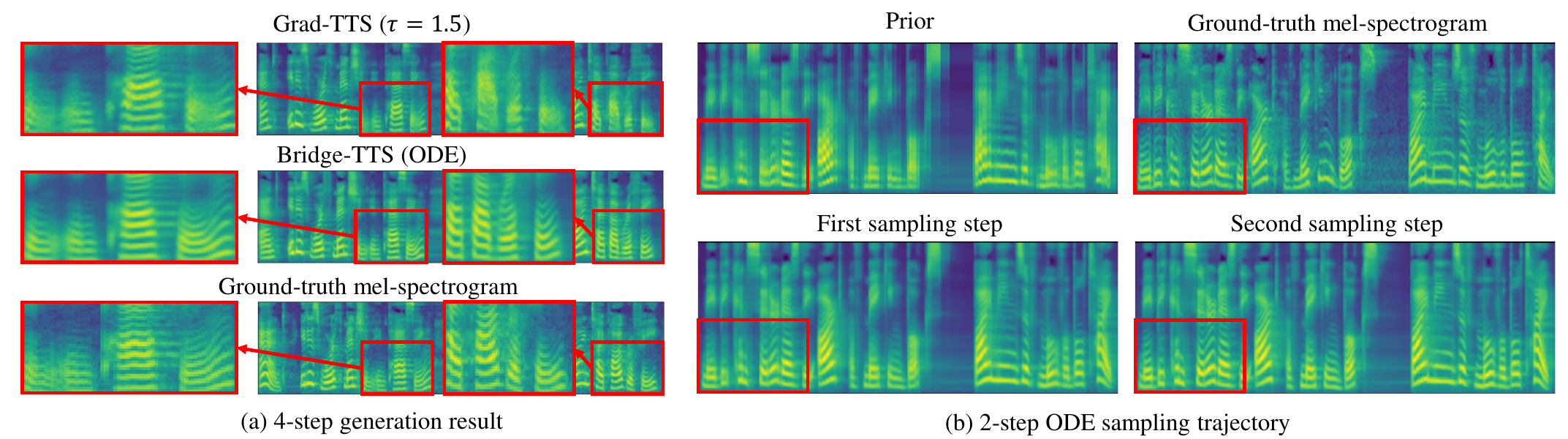}
\caption{We show a 4-step ODE generation result of Grad-TTS \citep{Grad-TTS} and Bridge-TTS in the left figure, and a 2-step ODE sampling trajectory of Bridge-TTS in the right one. The ground-truth mel-spectrogram is shown for comparison.}
\label{fig:samplecomparison}
\end{figure}

\subsection{Ablation Study}
\label{sec:ablation_study}
We conduct several comparison studies by showing the CMOS results between different designs of prior, noise schedule, and sampler when NFE$=$1000 and NFE$=$4. The base setup is the Bridge-gmax schedule, $\vx_0$ predictor, and temperature-scaled first-order SDE sampler ($\tau_b$ = 2). 

\paragraph{Prior}
We explore two training strategies that differ in their prior: $1$) like Grad-TTS \citep{Grad-TTS}, the encoder and decoder part are joint trained from scratch (i.e., mutable prior); $2$) the encoder is first trained with a warm-up stage and then the decoder is trained from scratch (i.e., fixed prior). 
\begin{wraptable}[11]{r}{0.43\textwidth}
\small
\vspace{-0.85\intextsep}
\caption{CMOS comparison of training and sampling settings of Bridge-TTS.}
\label{tab:methodcomparison}
\centering
\resizebox{0.43\textwidth}{!}{
\begin{tabular}{l|cc}
\toprule 
Method & NFE$=$4 & NFE$=$1000  \\
\midrule
Bridge-TTS (gmax) & 0 & 0 \\
\midrule
    \textit{w.} mutable prior & - 0.13 & - 0.17 \\
    \textit{w.} constant $g(t)$ & - 0.12 & - 0.14 \\
    \textit{w.} VP  & - 0.03   & - 0.08 \\
\midrule
    \textit{w.} SDE~($\tau_b$ = 1) & - 0.07 &  - 0.19 \\
    \textit{w.} ODE & - 0.10  & + 0.00 \\
\bottomrule
\end{tabular}
}
\end{wraptable}
It should be noted that in both strategies, the text encoder is trained with an equivalent objective. 
As shown, the latter consistently has better sample quality across different NFEs. Hence, we adopt it as our default setting.

\paragraph{Noise schedule}
We compare three different configurations for noise schedules: Bridge-gmax, Bridge-VP, and a simple schedule with $f(t) = 0, g(t) = 5$ that has virtually the same maximum marginal variance as Bridge-gmax, which we refer to as ``constant $g(t)$''. As shown in Table~\ref{tab:methodcomparison}, Bridge-gmax and Bridge-VP have overall similar performance, while the constant $g(t)$ has noticeably degraded quality than Bridge-gmax when NFE$=$1000.
Intuitively, the Bridge-gmax and Bridge-VP have an asymmetric pattern of marginal variance that assigns more steps for denoising, while the constant $g(t)$ yields a symmetric pattern. Empirically, such an asymmetric pattern of marginal variance helps improve sample quality. 
We provide a more detailed illustration of the noise schedules in Appendix~\ref{appendix:forwardprocess}. 

\paragraph{Sampling process}
For comparison between different sampling processes, the temperature-scaled SDE ($\tau_b$ = 2) achieves the best quality at both NFE$=$4 and NFE$=$1000. Compared with the vanilla SDE sampling (i.e., $\tau_b$ = 1), introducing the temperature sampling technique for SDE can effectively reduce artifacts in the background and enhance the sample quality when NFE is large, which is clearly reflected in the CMOS score in Table~\ref{tab:methodcomparison}. Meanwhile, the ODE sampler exhibits the same quality as the temperature-scaled SDE at NFE$=$1000, but it has more evident artifacts at NFE$=$4. 

\section{Related Work}
\label{relatedwork}
\paragraph{Diffusion-based TTS Synthesis}
Grad-TTS \citep{Grad-TTS} builds a strong TTS baseline with SGMs, surpassing the transformer-based \citep{Fastspeech1} and flow-based model \citep{kim2020glow}. In the following works, fast sampling methods are extensively studied, such as improving prior distribution \citep{PriorGrad}, designing training-free sampler \citep{Diff-TTS,FastGrad-TTS}, using auxiliary model \citep{Diffsinger,ResGrad}, introducing adversarial loss \citep{DiffGAN-TTS,SpecDiffGAN-TTS}, employing knowledge distillation \citep{ProDiff,CoMoSpeech}, developing lightweight U-Net \citep{LightGrad}, and leveraging CFM framework \citep{MachaTTS,VoiceFlow,ReFlow-TTS}. However, these methods usually explore to find a better trade-off between TTS quality and sampling speed than diffusion models instead of simultaneously improving both of them, and some of these methods require extra procedures, such as data pre-processing, auxiliary networks, and distillation stage, or prone to training instability. In contrast to each of the previous methods that study a data-to-noise process, we present a novel TTS system with a tractable Schrodinger bridge, demonstrating the advantages of the data-to-data process.

\paragraph{Schrodinger bridge}
Solving the Schrodinger bridge problem with an iterative procedure to simulate the intractable stochastic processes is widely studied \citep{Diffusion-SB-IPF,SB-DGM,solvingBridge,Diffusion-SB-Likelihood,IDBM,DSBM,ConstrainedBridges}. Two recent works \citep{I2SB,AlignedDSB} build the bridge in image translation and a biology task, while neither of them investigates the design space discussed in our work, which is of importance to sample quality and inference speed. 

\section{Conclusions}
We present Bridge-TTS, a novel TTS method built on data-to-data process, enabling mel-spectrogram generation from a deterministic prior via Schrodinger bridge. Under our 
theoretically elaborated tractable, flexible SB framework, we exhaustively explore the design space of noise schedule, model parameterization, and stochastic/deterministic samplers. Experimental results on sample quality and sampling efficiency in TTS synthesis demonstrate the effectiveness of our approach, which significantly outperforms previous methods and becomes a new baseline on this task.
We hope our work could open a new avenue for exploiting the board family of strong informative prior to further unleash the potential of generative models on a wide range of applications.

\bibliographystyle{plainnat}
\bibliography{refs}

\newpage
\setcounter{tocdepth}{2}
\begin{spacing}{1.4}
    {\small \tableofcontents}
\end{spacing}

\newpage
\appendix
\addcontentsline{toc}{section}{Appendix}

\section{Proofs}
\subsection{Tractable Schrodinger Bridge between Gaussian-Smoothed Paired Data}
\label{appendix:proof-tractable-sb}
\begin{proof}[Proof of Proposition~\ref{prop:tractable-sb}]
First, we conduct a similar transformation to~\cite{I2SB}, which reverses the forward-backward SDE system of the SB in~\eqref{eq:SB-SDE} and absorb the intractable term $\widehat\Psi,\Psi$ into the boundary condition. On one hand, by inspecting the backward SDE (\eqref{eq:SB-reverseSDE}) and its corresponding PDE (the second equation in~\eqref{eq:SB-PDE}), we can discover that if we regard $\widehat\Psi$ as a probability density function (up to a normalizing factor, which is canceled when we compute the score by operator $\nabla\log$), then the PDE of the backward SDE is a realization of the following forward SDE due to the Fokker-Plank equation~\citep{SGM}:
\begin{equation}
    \label{SB:linear-forwardSDE}
    \dm \x_t = \fv(\x_t, t)\dm t + g(t)\dm\wv_t,\quad\x_0 \sim \widehat{\Psi}_0,
\end{equation}
and its associated density of $\x_t$ is $\widehat\Psi_t$. When we assume $\fv(\x_t,t)=f(t)\x_t$ as a linear drift, then~\eqref{SB:linear-forwardSDE} becomes a 
\textit{narrow-sense linear SDE}, whose conditional distribution $\widehat\Psi_{t|0}(\x_t|\x_0)$ is a tractable Gaussian, which we will prove as follows. 

Specifically, Itô's formula~\citep{ito1951formula} tells us that, for a general SDE with drift $\mu_t$ and diffusion $\sigma_t$:
\begin{equation}
    \dm x_t=\mu_t(x_t)\dm t+\sigma_t(x_t)\dm w_t
\end{equation}
If $f(x,t)$ is a twice-differentiable function, then
\begin{equation}
    \dm f(x_t,t)=\left(\frac{\partial f}{\partial t}(x_t,t)+\mu_t(x_t)\frac{\partial f}{\partial x}(x_t,t)+\frac{\sigma_t^2(x_t)}{2}\frac{\partial^2f}{\partial x^2}(x_t,t)\right)\dm t+\sigma_t(x_t)\frac{\partial f}{\partial x}(x_t,t)\dm w_t
\end{equation}

Denote $\alpha_t=e^{\int_0^tf(\tau)\dm\tau}$, if we choose $\fv(\x,t)=\frac{\x}{\alpha_t}$, by Itô's formula we have
\begin{equation}
    \dm\left(\frac{\x_t}{\alpha_t}\right)=\frac{g(t)}{\alpha_t}\dm\wv_t
\end{equation}
which clearly leads to the result
\begin{equation}
    \frac{\x_t}{\alpha_t}-\frac{\x_0}{\alpha_0}\sim\Nc\left(\vect0,\int_0^t\frac{g^2(\tau)}{\alpha_\tau^2}\dm\tau\Iv\right)
\end{equation}
If we denote $\sigma_t^2=\int_0^t\frac{g^2(\tau)}{\alpha_\tau^2}\dm\tau$, finally we conclude that $\widehat\Psi_{t|0}(\x_t|\x_0)=\Nc(\alpha_t\x_0,\alpha_t^2\sigma_t^2\Iv)$.

On the other hand, due to the symmetry of the SB, we can reverse the time $t$ by $s=1-t$ and conduct similar derivations for $\Psi$, which finally leads to the result $\Psi_{t|1}(\x_t|\x_1)=\Nc(\bar\alpha_t\x_1,\alpha_t^2\bar\sigma_t^2\Iv)$.

Since we have Gaussian boundary conditions:
\begin{equation}
    p_{\data}=\widehat\Psi_0\Psi_0=\Nc(\x_0,\epsilon^2\Iv),\quad p_{\prior}=\widehat\Psi_1\Psi_1=\Nc(\x_1,\alpha_1^2\epsilon^2\Iv)
\end{equation}
Due to the properties of Gaussian distribution, it is intuitive to assume that the marginal distributions $\widehat\Psi_0,\Psi_1$ are also Gaussian. We parameterize them with undetermined mean and variance as follows:
\begin{equation}
    \widehat\Psi_0=\Nc(\av,\sigma^2\Iv),\quad \Psi_1=\Nc(\bv,\alpha_1^2\sigma^2\Iv)
\end{equation}
Since the conditional transitions $\widehat\Psi_{t|0},\Psi_{t|1}$ are known Gaussian as we have derived, the marginals at any $t\in [0,1]$ are also Gaussian (which can be seen as a simple linear Gaussian model):
\begin{equation}
\label{eq:marginal-t-undetermined}
    \widehat\Psi_t=\Nc(\alpha_t\av,(\alpha_t^2\sigma^2+\alpha_t^2\sigma_t^2)\Iv),\quad \Psi_t=\Nc(\bar\alpha_t\bv,(\alpha_t^2\sigma^2+\alpha_t^2\bar\sigma_t^2)\Iv)
\end{equation}
Then we can solve the coefficients $\av,\bv,\sigma$ by the boundary conditions. Note that $\bar\sigma_0^2=\sigma_1^2, \bar\alpha_0=\frac{1}{\alpha_1}$, and the product of two Gaussian probability density functions is given by
\begin{equation}
\Nc(\mu_1,\sigma_1^2)\Nc(\mu_2,\sigma_2^2)=\Nc\left(\frac{\sigma_2^2\mu_1+\sigma_1^2\mu_2}{\sigma_1^2+\sigma_2^2},\frac{\sigma_1^2\sigma_2^2}{\sigma_1^2+\sigma_2^2}\right)
\end{equation}
We have
\begin{align}
    &\begin{cases}
\widehat\Psi_0\Psi_0=\Nc(\av,\sigma^2\Iv)\Nc(\bar\alpha_0\bv,(\alpha_0^2\sigma^2+\alpha_0^2\bar\sigma_0^2)\Iv)=\Nc(\x_0,\epsilon^2\Iv)\\
\widehat\Psi_1\Psi_1=\Nc(\alpha_1\av,(\alpha_1^2\sigma^2+\alpha_1^2\sigma_1^2)\Iv)\Nc(\bv,\alpha_1^2\sigma^2\Iv)=\Nc(\x_1,\alpha_1^2\epsilon^2\Iv)
    \end{cases}\\
    \Rightarrow&\begin{cases}
        \displaystyle\frac{(\sigma^2+\sigma_1^2)\av+\sigma^2\frac{\bv}{\alpha_1}}{2\sigma^2+\sigma_1^2}=\x_0\\
        \displaystyle\frac{\alpha_1\sigma^2\av+(\sigma^2+\sigma_1^2)\bv}{2\sigma^2+\sigma_1^2}=\x_1\\
        \displaystyle\frac{\sigma^2(\sigma^2+\sigma_1^2)}{2\sigma^2+\sigma_1^2}=\epsilon^2
    \end{cases}
    \Rightarrow\begin{cases}
        \displaystyle\av=\x_0+\frac{\sigma^2}{\sigma_1^2}\left(\x_0-\frac{\x_1}{\alpha_1}\right)\\
        \displaystyle\bv=\x_1+\frac{\sigma^2}{\sigma_1^2}(\x_1-\alpha_1\x_0)\\
        \displaystyle\sigma^2=\epsilon^2+\frac{\sqrt{\sigma_1^4+4\epsilon^4}-\sigma_1^2}{2}
    \end{cases}
\end{align}

The proof is then completed by substituting these solved coefficients back into~\eqref{eq:marginal-t-undetermined}.
\end{proof}

\subsection{Bridge Sampling}
\label{appendix:proof-sb-sampling}
First of all, we would like to give some background information about exponential integrators~\citep{calvo2006class,hochbruck2009exponential}, which are widely used in recent works concerning fast sampling of diffusion ODE/SDEs~\citep{DPM-Solver,DPM-Solver++,zheng2023dpm,SEEDS}. Suppose we have an SDE (or equivalently an ODE by setting $g(t)=0$):
\begin{equation}
\label{eq:general_SDE}
    \dm\x_t=[a(t)\x_t+b(t)\Fv_\theta(\x_t,t)]\dm t+g(t)\dm\wv_t
\end{equation}
where $\Fv_\theta$ is the parameterized prediction function that we want to approximate with Taylor expansion. The usual way of representing its analytic solution $\x_t$ at time $t$ with respect to an initial
condition $\x_s$ at time $s$ is
\begin{equation}
\label{eq:general_SDE_solution_usual}
\x_t=\x_s+\int_s^t[a(\tau)\x_\tau+b(\tau)\Fv_\theta(\x_\tau,\tau)]\dm\tau+\int_s^t g(\tau)\dm\wv_\tau
\end{equation}
By approximating the involved integrals in~\eqref{eq:general_SDE_solution_usual}, we can obtain direct discretizations of~\eqref{eq:general_SDE} such as Euler's method. The key insight of exponential integrators is that, it is often better to utilize the ``semi-linear'' structure of~\eqref{eq:general_SDE} and analytically cancel the linear term $a(t)\x_t$. This way, we can obtain solutions that only involve integrals of $\Fv_\theta$ and result in lower discretization errors. Specifically, by the ``variation-of-constants'' formula, the exact solution of~\eqref{eq:general_SDE} can be alternatively given by
\begin{equation}
\x_t=e^{\int_s^ta(\tau)\dm\tau}\x_s+\int_s^te^{\int_\tau^ta(r)\dm r}b(\tau)\Fv_\theta(\x_\tau,\tau)\dm\tau+\int_s^te^{\int_\tau^ta(r)\dm r}g(\tau)\dm\wv_\tau
\end{equation}
or equivalently (assume $t<s$)
\begin{equation}
\label{eq:general_SDE_solution_exponential}
\x_t=e^{\int_s^ta(\tau)\dm\tau}\x_s+\int_s^te^{\int_\tau^ta(r)\dm r}b(\tau)\Fv_\theta(\x_\tau,\tau)\dm\tau+\sqrt{-\int_s^te^{2\int_\tau^ta(r)\dm r}g^2(\tau)\dm\tau}\epsilonv,\quad \epsilonv\sim\Nc(\vect0,\Iv)
\end{equation}

Then we prove Proposition~\ref{prop:sampler} below.
\begin{proof}[Proof of Proposition~\ref{prop:sampler}]
First, we consider the bridge SDE in~\eqref{eq:bridge-SDE}. By collecting the linear terms w.r.t. $\x_t$, the bridge SDE can be rewritten as
\begin{equation}
    \dm\x_t=\left[\left(f(t)+\frac{g^2(t)}{\alpha_t^2\sigma_t^2}\right)\x_t-\frac{g^2(t)}{\alpha_t\sigma_t^2}\x_\theta(\x_t,t)\right]\dm t+g(t)\dm\wv_t
\end{equation}
By corresponding it to~\eqref{eq:general_SDE}, we have
\begin{equation}
    a(t)=f(t)+\frac{g^2(t)}{\alpha_t^2\sigma_t^2},\quad b(t)=-\frac{g^2(t)}{\alpha_t\sigma_t^2}
\end{equation}
The exponents in~\eqref{eq:general_SDE_solution_exponential} can be calculated as
\begin{equation}
    \int_s^ta(\tau)\dm\tau=\int_s^tf(\tau)\dm\tau+\int_s^t\frac{(\sigma_\tau^2)'}{\sigma_\tau^2}\dm\tau=\int_s^tf(\tau)\dm\tau+\log\frac{\sigma_t^2}{\sigma_s^2}
\end{equation}
Thus
\begin{equation}
    e^{\int_s^ta(\tau)\dm\tau}=\frac{\alpha_t\sigma_t^2}{\alpha_s\sigma_s^2},\quad e^{\int_{\tau}^ta(r)\dm r}=\frac{\alpha_t\sigma_t^2}{\alpha_\tau\sigma_\tau^2}
\end{equation}
Therefore, the exact solution in~\eqref{eq:general_SDE_solution_exponential} becomes
 \begin{equation}
 \label{eq:proof-SDE-1}
     \x_t=\frac{\alpha_t\sigma_t^2}{\alpha_s\sigma_s^2}\x_s-\alpha_t\sigma_t^2\int_s^t\frac{g^2(\tau)}{\alpha_\tau^2\sigma_\tau^4}\x_\theta(\x_\tau,\tau)\dm\tau+\alpha_t\sigma_t^2\sqrt{-\int_s^t\frac{g^2(\tau)}{\alpha_\tau^2\sigma_\tau^4}\dm\tau}\epsilonv,\quad\epsilonv\sim \Nc(\vect0,\Iv)
 \end{equation}
where
\begin{equation}
 \label{eq:proof-SDE-2}
     \int_s^t\frac{g^2(\tau)}{\alpha_\tau^2\sigma_\tau^4}\dm\tau=\int_s^t\frac{(\sigma_\tau^2)'}{\sigma_\tau^4}\dm\tau=\frac{1}{\sigma_s^2}-\frac{1}{\sigma_t^2}
\end{equation}

Substituting~\eqref{eq:proof-SDE-2} into~\eqref{eq:proof-SDE-1}, we obtain the exact solution in~\eqref{eq:exact-solution-SDE}. If we take the first-order approximation (i.e., $\x_\theta(\x_\tau,\tau)\approx\x_\theta(\x_s,s)$ for $\tau\in [t,s]$), then we obtain the first-order transition rule in~\eqref{eq:first-order-SDE}.

Then we consider the bridge ODE in~\eqref{eq:bridge-ODE}. By collecting the linear terms w.r.t. $\x_t$, the bridge ODE can be rewritten as
\begin{equation}
    \dm\x_t=\left[\left(f(t)-\frac{g^2(t)}{2\alpha_t^2\bar\sigma_t^2}+\frac{g^2(t)}{2\alpha_t^2\sigma_t^2}\right)\x_t+\frac{g^2(t)\bar\alpha_t}{2\alpha_t^2\bar\sigma_t^2}\x_1-\frac{g^2(t)}{2\alpha_t\sigma_t^2}\x_\theta(\x_t,t)\right]\dm t
\end{equation}
By corresponding it to~\eqref{eq:general_SDE}, we have
\begin{equation}
    a(t)=f(t)-\frac{g^2(t)}{2\alpha_t^2\bar\sigma_t^2}+\frac{g^2(t)}{2\alpha_t^2\sigma_t^2},\quad b_1(t)=\frac{g^2(t)\bar\alpha_t}{2\alpha_t^2\bar\sigma_t^2},\quad b_2(t)=-\frac{g^2(t)}{2\alpha_t\sigma_t^2}
\end{equation}
The exponents in~\eqref{eq:general_SDE_solution_exponential} can be calculated as
\begin{equation}
    \begin{aligned}
\int_s^ta(\tau)\dm\tau&=\int_s^tf(\tau)\dm\tau-\int_s^t\frac{g^2(\tau)}{2\alpha_\tau^2\bar\sigma_\tau^2}\dm\tau+\int_s^t\frac{g^2(\tau)}{2\alpha_\tau^2\sigma_\tau^2}\dm\tau\\
&=\int_s^tf(\tau)\dm\tau+\int_s^t\frac{(\bar\sigma_\tau^2)'}{2\bar\sigma_\tau^2}\dm\tau+\int_s^t\frac{(\sigma_\tau^2)'}{2\sigma_\tau^2}\dm\tau\\
&=\int_s^tf(\tau)\dm\tau+\frac{1}{2}\log\frac{\bar\sigma_t^2}{\bar\sigma_s^2}+\frac{1}{2}\log\frac{\sigma_t^2}{\sigma_s^2}
\end{aligned}
\end{equation}
Thus
\begin{equation}
e^{\int_s^ta(\tau)\dm\tau}=\frac{\alpha_t\sigma_t\bar\sigma_t}{\alpha_s\sigma_s\bar\sigma_s},\quad e^{\int_{\tau}^ta(r)\dm r}=\frac{\alpha_t\sigma_t\bar\sigma_t}{\alpha_\tau\sigma_\tau\bar\sigma_\tau}
\end{equation}

Therefore, the exact solution in~\eqref{eq:general_SDE_solution_exponential} becomes
\begin{equation}
\label{eq:proof-ODE-1}
     \x_t=\frac{\alpha_t\sigma_t\bar\sigma_t}{\alpha_s\sigma_s\bar\sigma_s}\x_s+\frac{\bar\alpha_t\sigma_t\bar\sigma_t}{2}\int_s^t\frac{g^2(\tau)}{\alpha_\tau^2\sigma_\tau\bar\sigma_\tau^3}\x_1\dm\tau-\frac{\alpha_t\sigma_t\bar\sigma_t}{2}\int_s^t\frac{g^2(\tau)}{\alpha_\tau^2\sigma_\tau^3\bar\sigma_\tau}\x_\theta(\x_\tau,\tau)\dm\tau
 \end{equation}
 Due the relation $\sigma_t^2+\bar\sigma_t^2=\sigma_1^2$, the integrals can be computed by the substitution $\theta_t=\arctan(\sigma_t/\bar\sigma_t)$
 \begin{equation}
 \label{eq:proof-ODE-2}
 \begin{aligned}
     \int_s^t\frac{g^2(\tau)}{\alpha_\tau^2\sigma_\tau\bar\sigma_\tau^3}\dm\tau&=\int_s^t\frac{(\sigma_\tau^2)'}{\sigma_\tau\bar\sigma_\tau^3}\dm\tau\\
     &=\int_{\theta_s}^{\theta_t}\frac{1}{\sigma_1^4\sin\theta\cos^3\theta}\dm(\sigma_1^2\sin^2\theta)\\
     &=\frac{2}{\sigma_1^2}\int_{\theta_s}^{\theta_t}\frac{1}{\cos^2\theta}\dm\theta\\
     &=\frac{2}{\sigma_1^2}(\tan\theta_t-\tan\theta_s)\\
     &=\frac{2}{\sigma_1^2}\left(\frac{\sigma_t}{\bar\sigma_t}-\frac{\sigma_s}{\bar\sigma_s}\right) 
 \end{aligned}
 \end{equation}
 and similarly
 \begin{equation}
     \label{eq:proof-ODE-3}
     \int_s^t\frac{g^2(\tau)}{\alpha_\tau^2\sigma_\tau^3\bar\sigma_\tau}\dm\tau=\frac{2}{\sigma_1^2}\left(\frac{\bar\sigma_s}{\sigma_s}-\frac{\bar\sigma_t}{\sigma_t}\right)
 \end{equation}
 Substituting~\eqref{eq:proof-ODE-2} and~\eqref{eq:proof-ODE-3} into~\eqref{eq:proof-ODE-1}, we obtain the exact solution in~\eqref{eq:exact-solution-ODE}. If we take the first-order approximation (i.e., $\x_\theta(\x_\tau,\tau)\approx\x_\theta(\x_s,s)$ for $\tau\in [t,s]$), then we obtain the first-order transition rule in~\eqref{eq:first-order-ODE}.
\end{proof}
\section{Relationship with Brownian Bridge, Posterior Sampling and DDIM}
\subsection{Schrodinger Bridge Problem and Brownian Bridge}
\label{appendix:brownian-bridge}
For any path measure $\mu$ on $[0,1]$, we have $\mu=\mu_{0,1}\mu_{|0,1}$, where $\mu_{0,1}$ denotes the joint distribution of $\mu_0,\mu_1$, and $\mu_{|0,1}$ denotes the conditional path measure on $(0,1)$ given boundaries $\x_0,\x_1$. A high-level perspective is that, using the decomposition formula for KL divergence $\kl{p}{p^{\text{ref}}}=\kl{p_{0,1}}{p^{\text{ref}}_{0,1}}+\kl{p_{|0,1}}{p^{\text{ref}}_{|0,1}}$~\citep{leonard2014some}, the SB problem in~\eqref{eq:SB-DEF} can be reduced to the \textit{static SB} problem~\citep{Diffusion-SB-IPF,DSBM,tong2023simulation,tong2023improving}:
\begin{equation}
\min_{p_{0,1}\in\mathcal{P}_2}\kl{p_{0,1}}{p^{\text{ref}}_{0,1}},\quad \textit{s.t.}\,\, p_0=p_{\data},p_1=p_{\prior}
\end{equation}
which is proved to be an \textit{entropy-regularized optimal transport} problem when $p^{\text{ref}}$ is defined by a scaled Brownian process $\dm\x_t=\sigma\dm\wv_t$. We can draw similar conclusions for the more general case of reference SDE in~\eqref{eq:forwardSDE} with linear drift $\fv(\x_t,t)=f(t)\x_t$. Specifically, the KL divergence between the joint distribution of boundaries is
\begin{equation}
\begin{aligned}
    \kl{p_{0,1}}{p_{0,1}^{\text{ref}}}&=-\E_{p_{0,1}}[\log p_{0,1}^{\text{ref}}]-\Hc(p_{0,1})\\
    &=-\E_{p_{0}}[\log p_{0}^{\text{ref}}]-\E_{p_{0,1}}[\log p_{1|0}^{\text{ref}}]-\Hc(p_{0,1})
\end{aligned}
\end{equation}
where $\Hc(\cdot)$ is the entropy. As we have proved in Appendix~\ref{appendix:proof-tractable-sb}, $p_{t|0}^{\text{ref}}(\x_t|\x_0)=\Nc(\alpha_t\x_0,\alpha_t^2\sigma_t^2\Iv)$, thus
\begin{equation}
    \log p_{1|0}^{\text{ref}}(\x_1|\x_0)=-\frac{\|\x_1-\alpha_1\x_0\|_2^2}{2\alpha_1^2\sigma_1^2}
\end{equation}
Since $\E_{p_{0}}[\log p_{0}^{\text{ref}}]=\E_{p_{\data}}[\log p_{\data}]$ is irrelevant to $p$, the static SB problem is equivalent to
\begin{equation}
\min_{p_{0,1}\in\mathcal{P}_2} \E_{p_{0,1}(\x_0,\x_1)}[\|\x_1-\alpha_1\x_0\|_2^2]-2\alpha_1^2\sigma_1^2\Hc(p_{0,1}),\quad \textit{s.t.}\,\, p_0=p_{\data},p_1=p_{\prior}
\end{equation}
Therefore, it is an entropy-regularized optimal transport problem when $\alpha_1=1$.

While the static SB problem is generally non-trivial, there exists application cases when we can skip it: when the coupling $p_{0,1}$ of $p_{\data}$ and $p_{\prior}$ is unique and has no room for further optimization. (1) When $p_{\data}$ is a Dirac delta distribution and $p_{\prior}$ is a usual distribution~\citep{I2SB}. In this case, the SB is half tractable, and only the bridge SDE holds. (2) When paired data are considered, i.e., the coupling of $p_{\data}$ and $p_{\prior}$ is mixtures of dual Dirac delta distributions. In this case, however, $\kl{p_{0,1}}{p_{0,1}^{\text{ref}}}=\infty$, and the SB problem will collapse. Still, we can ignore such singularity, so that the SB is fully tractable, and bridge ODE can be derived.

After the static SB problem is solved, we only need to minimize $\kl{p_{|0,1}}{p^{\text{ref}}_{|0,1}}$ in order to solve the original SB problem. In fact, since there is no constraints, such optimization directly leads to $p_{t|0,1}=p^{\text{ref}}_{t|0,1}$ for $t\in (0,1)$. When $p^{\text{ref}}$ is defined by a scaled Brownian process $\dm\x_t=\sigma\dm\wv_t$, $p^{\text{ref}}_{t|0,1}$ is the common Brownian bridge~\citep{SE-Bridge,tong2023simulation,tong2023improving}. When $p^{\text{ref}}$ is defined by the narrow-sense linear SDE $\dm\x_t=f(t)\x_t\dm t+g(t)\dm\wv_t$ which we considered, $p^{\text{ref}}_{t|0,1}$ can be seen as the generalized Brownian bridge with linear drift and time-varying volatility, and we can derive its formula as follows.

Similar to the derivations in Appendix~\ref{appendix:proof-tractable-sb}, the transition probability from time $s$ to time $t$ ($s<t$) following the reference SDE $\dm\x_t=f(t)\x_t\dm t+g(t)\dm\wv_t$ is
\begin{equation}
    p^{\text{ref}}_{t|s}(\x_t|\x_s)=\Nc(\x_t;\alpha_{t|s}\x_s,\alpha_{t|s}^2\sigma_{t|s}^2\Iv)
\end{equation}
where $\alpha_{t|s},\sigma_{t|s}$ are the corresponding coefficients to $\alpha_t,\sigma_t$, while modifying the lower limit of integrals from $0$ to $s$:
\begin{equation}
\alpha_{t|s}=e^{\int_s^tf(\tau)\dm\tau},\quad \sigma_{t|s}^2=\int_s^t\frac{g^2(\tau)}{\alpha^2_{\tau|s}}\dm\tau
\end{equation}
We can easily identify that $\alpha_{t|s},\sigma_{t|s}$ are related to $\alpha_t,\sigma_t$ by
\begin{equation}
    \alpha_{t|s}=\frac{\alpha_t}{\alpha_s},\quad \sigma_{t|s}^2=\alpha_s^2(\sigma_t^2-\sigma_s^2)
\end{equation}
Therefore
\begin{equation}
    p^{\text{ref}}_{t|s}(\x_t|\x_s)=\Nc\left(\x_t;\frac{\alpha_t}{\alpha_s}\x_s,\alpha_t^2(\sigma_t^2-\sigma_s^2)\Iv\right)
\end{equation}
Due to the Markov property of the SDE, we can compute $p^{\text{ref}}_{t|0,1}$ as
\begin{equation}
\label{eq:brownian-bridge-01-derivation}
\begin{aligned}
    p^{\text{ref}}_{t|0,1}(\x_t|\x_0,\x_1)&=\frac{p^{\text{ref}}_{t,1|0}(\x_t,\x_1|\x_0)}{p^{\text{ref}}_{1|0}(\x_1|\x_0)}\\
    &=\frac{p^{\text{ref}}_{t|0}(\x_t|\x_0)p^{\text{ref}}_{1|t}(\x_1|\x_t)}{p^{\text{ref}}_{1|0}(\x_1|\x_0)}\\
    &\propto\frac{\exp\left(-\frac{\|\x_t-\alpha_t\x_0\|_2^2}{2\alpha_t^2\sigma_t^2}\right)\exp\left(-\frac{\|\x_1-\frac{\alpha_1}{\alpha_t}\x_t\|_2^2}{2\alpha_1^2(\sigma_1^2-\sigma_t^2)}\right)}{\exp\left(-\frac{\|\x_1-\alpha_1\x_0\|_2^2}{2\alpha_1^2\sigma_1^2}\right)}\\
    &\propto\exp\left(-\frac{\|\x_t-\alpha_t\x_0\|_2^2}{2\alpha_t^2\sigma_t^2}-\frac{\|\x_t-\bar\alpha_t\x_1\|_2^2}{2\alpha_t^2\bar\sigma_t^2}\right)\\
    &\propto \exp\left(-\frac{\|\x_t-\frac{\alpha_t\bar\sigma_t^2\x_0+\bar\alpha_t\sigma_t^2\x_1}{\sigma_1^2}\|_2^2}{2\frac{\alpha_t^2\sigma_t^2\bar\sigma_t^2}{\sigma_1^2}}\right)
\end{aligned}
\end{equation}
Therefore, $p^{\text{ref}}_{t|0,1}=\Nc\left(\frac{\alpha_t\bar\sigma_t^2\x_0+\bar\alpha_t\sigma_t^2\x_1}{\sigma_1^2},\frac{\alpha_t^2\bar\sigma_t^2\sigma_t^2}{\sigma_1^2}\Iv\right)$, which equals the SB marginal in~\eqref{eq:tractable_marginal}.
\subsection{Posterior Sampling on a Brownian Bridge and DDIM}
\label{appendix:posterior-sampling}
\paragraph{Posterior Sampling and Bridge SDE} \cite{I2SB} proposes a method called \textit{posterior sampling} to sample from bridge: when $p^{\text{ref}}$ is defined by $\dm\x_t=\sqrt{\beta_t}\dm\wv_t$, we can sample $\x_{N-1},\dots,\x_{n+1},\x_n,\dots,\x_0$ at timesteps $t_{N-1},\dots,t_{n+1},t_n,\dots,t_0$ sequentially, where at each step the sample is generated from the DDPM posterior~\citep{DDPM}:
\begin{equation}
    p(\x_n | \x_0, \x_{n+1})= \mathcal N\left(\x_n; \frac{\alpha_n^2}{\alpha_n^2 + \sigma_n^2}\x_0 + \frac{\sigma_n^2}{\alpha_n^2 + \sigma_n^2}\x_{n+1}, \frac{\sigma_n^2\alpha_n^2}{\alpha_n^2 + \sigma_n^2} \Iv\right),
\end{equation}
where $\alpha_n^2 = \int_{t_{n}}^{t_{n+1}} \beta(\tau)\dm\tau $ is the accumulated noise between two timesteps $(t_n, t_{n+1})$, $\sigma_n^2=\int_0^{t_n}\beta(\tau)\dm\tau$, and $\x_0$ is predicted by the network.

While they only consider $f(t)=0$ and prove the case for discrete timesteps by onerous mathematical induction, such posterior is essentially a ``shortened'' Brownian bridge. Suppose we already draw a sample $\x_s\sim p^{\text{ref}}_{s|0,1}$, then the sample at time $t<s$ can be drawn from $p^{\text{ref}}_{t|0,1,s}$, which equals $p^{\text{ref}}_{t|0,s}$ due to the Markov property of the SDE. Similar to the derivation in~\eqref{eq:brownian-bridge-01-derivation}, such shortened Brownian bridge is
\begin{equation}
    p^{\text{ref}}_{t|0,s}(\x_t|\x_0,\x_s)=\Nc\left(\x_t;\frac{\alpha_t(\sigma_s^2-\sigma_t^2)\x_0+\frac{\alpha_t}{\alpha_s}\sigma_t^2\x_s}{\sigma_s^2},\frac{\alpha_t^2\sigma_t^2(\sigma_s^2-\sigma_t^2)}{\sigma_s^2}\Iv\right)
\end{equation}
which is exactly the same as the first-order discretization of bridge SDE in~\eqref{eq:first-order-SDE} when $\x_0$ is predicted by the network $\x_\theta(\x_s,s)$.
\paragraph{DDIM and Bridge ODE} DDIM~\citep{DDIM} is a sampling method for diffusion models, whose deterministic case is later proved to be the first-order discretization of certain solution forms of the diffusion ODE~\citep{DPM-Solver,DPM-Solver++}. Under our notations of $\alpha_t,\sigma_t^2$, the update rule of DDIM is~\citep{DPM-Solver++}
\begin{equation}
\label{eq:ddim}
    \x_t=\frac{\alpha_t\sigma_t}{\alpha_s\sigma_s}\x_s+\alpha_t\left(1-\frac{\sigma_t^2}{\sigma_s^2}\right)\x_\theta(\x_s,s)
\end{equation}
In the limit of $\frac{\sigma_s}{\sigma_1},\frac{\sigma_t}{\sigma_1}\rightarrow 0$, we have $\frac{\bar\sigma_s}{\sigma_1},\frac{\bar\sigma_t}{\sigma_1}\rightarrow 1$. Therefore, $\frac{\bar\sigma_t}{\bar\sigma_s}\rightarrow 1$, and we can discover that~\eqref{eq:first-order-ODE} reduces to~\eqref{eq:ddim}.
\begin{corollary}[1-step First-Order Bridge SDE/ODE Sampler Recovers Direct Data Prediction]
When $s=1$ and $t=0$, the first-order discretization of bridge SDE/ODE is
\begin{equation}
\label{one-stepEq}
    \x_0=\x_\theta(\x_1,1)
\end{equation}
\end{corollary}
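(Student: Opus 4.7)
The plan is to verify the claim by direct substitution into the two first-order discretizations given in Proposition~\ref{prop:sampler}, after computing the boundary values of the noise schedule. From the definitions $\alpha_t=e^{\int_0^t f(\tau)\dm\tau}$, $\sigma_t^2=\int_0^t g^2(\tau)/\alpha_\tau^2\dm\tau$, and $\bar\sigma_t^2=\int_t^1 g^2(\tau)/\alpha_\tau^2\dm\tau$, one immediately reads off $\alpha_0=1$, $\sigma_0=0$, $\bar\sigma_0=\sigma_1$, and $\bar\sigma_1=0$. These four identities drive the entire argument, so I would state them first as a short lemma.

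For the bridge SDE discretization in~\eqref{eq:first-order-SDE}, I would simply plug $s=1$, $t=0$ into each of the three coefficients. The prefactor of $\x_s=\x_1$ contains $\sigma_0^2=0$ and therefore vanishes, the noise coefficient contains $\sigma_0=0$ and therefore vanishes, and the coefficient of $\x_\theta(\x_1,1)$ becomes $\alpha_0(1-\sigma_0^2/\sigma_1^2)=1$. This part is a one-line computation with no subtlety.

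For the bridge ODE discretization in~\eqref{eq:first-order-ODE}, the substitution is more delicate because the formula literally contains $\bar\sigma_s$ and $\bar\sigma_1$ in denominators, and $\bar\sigma_1=0$. I would treat this as the main obstacle: the apparent $0/0$ terms must be interpreted as one-sided limits $s\to 1^-$ with $t=0$ (the regime in which the exact ODE solution~\eqref{eq:exact-solution-ODE} was derived). The key observation is that every ``dangerous'' denominator in~\eqref{eq:first-order-ODE} is paired in the numerator with an exact factor of $\sigma_t=\sigma_0=0$ or $\bar\sigma_1=0$. Concretely, the factor $\alpha_t\sigma_t\bar\sigma_t/(\alpha_s\sigma_s\bar\sigma_s)$ multiplying $\x_s$ has a numerator that is identically zero for $t=0$ and any $s\in(0,1)$, so the limit as $s\to 1^-$ is $0$; the cross term $\bar\sigma_s\sigma_t\bar\sigma_t/\sigma_s$ in the bracket has numerator identically zero at $t=0$; and the term $\sigma_s\sigma_t\bar\sigma_t/\bar\sigma_s$ likewise has $\sigma_t=0$ in its numerator while $\bar\sigma_s>0$ for $s<1$, so it too vanishes in the limit. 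After these cancellations, the only surviving contribution in~\eqref{eq:first-order-ODE} is $(\alpha_0/\sigma_1^2)\bar\sigma_0^2\,\x_\theta(\x_1,1)=\x_\theta(\x_1,1)$, proving~\eqref{one-stepEq}.

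If a reviewer prefers to avoid the limiting argument, I would offer the alternative of re-deriving the ODE step from the exact solution~\eqref{eq:exact-solution-ODE}: writing $\sigma_t\bar\sigma_t/(\sigma_s\bar\sigma_s)$ and $(1-\sigma_s\bar\sigma_t/(\bar\sigma_s\sigma_t))$ via the substitution $\theta_t=\arctan(\sigma_t/\bar\sigma_t)$ used in Appendix~\ref{appendix:proof-sb-sampling} recasts the troublesome ratios as $\cos\theta_s/\cos\theta_t$ and $1-\tan\theta_s/\tan\theta_t$ type expressions, which are manifestly well-defined and evaluate to $0$ and $1$ respectively at $\theta_s=\pi/2,\theta_t=0$. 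This makes the cancellation explicit without any indeterminate forms. Either route yields the identical 1-step rule for the SDE and ODE samplers, confirming that both reduce to direct data prediction at $s=1,t=0$.
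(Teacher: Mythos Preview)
Your proposal is correct and follows the same direct-substitution approach that the paper implicitly uses; the paper in fact states this corollary without proof, treating it as an immediate consequence of plugging the boundary values $\alpha_0=1$, $\sigma_0=0$, $\bar\sigma_0=\sigma_1$, $\bar\sigma_1=0$ into~\eqref{eq:first-order-SDE} and~\eqref{eq:first-order-ODE}. Your explicit limit argument for the $0/0$ forms in the ODE case (and the alternative $\theta_t=\arctan(\sigma_t/\bar\sigma_t)$ reparameterization) is more careful than anything the paper spells out, but it is exactly the right way to make the ``easy'' claim in the main text rigorous.
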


\section{High-Order Samplers}
\label{appendix:high-order-sampler}
We can develop high-order samplers by approximating $\x_\theta(\x_\tau,\tau),\tau\in[t,s]$ with high-order Taylor expansions. Specifically, we take the second-order case of the bridge SDE as an example. For the integral $\int_s^t\frac{g^2(\tau)}{\alpha_\tau^2\sigma_\tau^4}\x_\theta(\x_\tau,\tau)\dm\tau$ in~\eqref{eq:exact-solution-SDE}, we can use the change-of-variable $\lambda_t=-\frac{1}{\sigma_t^2}$. Since $(\lambda_t)'=\frac{g^2(t)}{\alpha_t^2\sigma_t^4}$, the integral becomes
\begin{equation}
\label{eq:second-order-1}
\begin{aligned}
\int_s^t\frac{g^2(\tau)}{\alpha_\tau^2\sigma_\tau^4}\x_\theta(\x_\tau,\tau)\dm\tau&=\int_{\lambda_s}^{\lambda_t}\x_\theta(\x_{\tau_\lambda},\tau_\lambda)\dm\lambda\\
&\approx \int_{\lambda_s}^{\lambda_t}\x_\theta(\x_s,s)+(\lambda-\lambda_s)\x^{(1)}_\theta(\x_s,s)\dm\lambda\\
&=(\lambda_t-\lambda_s)\x_\theta(\x_s,s)+\frac{(\lambda_t-\lambda_s)^2}{2}\x^{(1)}_\theta(\x_s,s)
\end{aligned}
\end{equation}
where $\tau_\lambda$ is the inverse mapping of $\lambda_\tau$, $\x^{(1)}_\theta$ is the first-order derivative of $\x_\theta$ w.r.t $\lambda$, and we have used the second-order Taylor expansion $\x_\theta(\x_{\tau_\lambda},\tau_\lambda)\approx \x_\theta(\x_s,s)+(\lambda-\lambda_s)\x^{(1)}_\theta(\x_s,s)$. $\x^{(1)}_\theta$ can be estimated by finite difference, and a simple treatment is the predictor-corrector method. We first compute $\hat\x_t$ by the first-order update rule in~\eqref{eq:first-order-SDE}, which is used to estimate $\x^{(1)}_\theta(\x_s,s)$:
$
    \x^{(1)}_\theta(\x_s,s)\approx \frac{\x_\theta(\hat\x_t,t)-\x_\theta(\x_s,s)}{\lambda_t-\lambda_s}
$. Substituting it into~\eqref{eq:second-order-1}, we have
$
    \int_s^t\frac{g^2(\tau)}{\alpha_\tau^2\sigma_\tau^4}\x_\theta(\x_\tau,\tau)\dm\tau\approx (\lambda_t-\lambda_s)\frac{\x_\theta(\x_s,s)+\x_\theta(\hat\x_t,t)}{2}
$
which literally can be seen as replacing $\x_\theta(\x_s,s)$ in~\eqref{eq:first-order-SDE} with $\frac{\x_\theta(\x_s,s)+\x_\theta(\hat\x_t,t)}{2}$. Similar derivations can be done for the bridge ODE. We summarize the second-order samplers in Algorithm~\ref{algorithm:SDE} and Algorithm~\ref{algorithm:ODE}.

\begin{algorithm}[H]
\caption{Second-order sampler for the bridge SDE}
\label{algorithm:SDE}
\textbf{Input:} Number of function evaluations (NFE) $2N$, timesteps $1=t_{N}>t_{N-1}>\dots>t_{n}>t_{n-1}>\dots>t_0=0$, initial condition $\x_1$
\begin{algorithmic}[1]
\FOR {$n = N$ \TO $1$}
\STATE $s\leftarrow t_n$
\STATE $t\leftarrow t_{n-1}$
\STATE Prediction: 
$
     \hat\x_t\leftarrow\frac{\alpha_t\sigma_t^2}{\alpha_s\sigma_s^2}\x_s+\alpha_t\left(1-\frac{\sigma_t^2}{\sigma_s^2}\right)\x_\theta(\x_s,s)+\alpha_t\sigma_t\sqrt{1-\frac{\sigma_t^2}{\sigma_s^2}}\epsilonv,\quad \epsilonv\sim\Nc(\vect 0,\Iv)
$
\STATE Correction: 
$
     \x_t\leftarrow\frac{\alpha_t\sigma_t^2}{\alpha_s\sigma_s^2}\x_s+\alpha_t\left(1-\frac{\sigma_t^2}{\sigma_s^2}\right)\frac{\x_\theta(\x_s,s)+\x_\theta(\hat\x_t,t)}{2}+\alpha_t\sigma_t\sqrt{1-\frac{\sigma_t^2}{\sigma_s^2}}\epsilonv,\quad \epsilonv\sim\Nc(\vect 0,\Iv)
$
\ENDFOR
\end{algorithmic}
\textbf{Output:} $\x_{0}$
\end{algorithm}
\begin{algorithm}[H]
\caption{Second-order sampler for the bridge ODE}
\label{algorithm:ODE}
\textbf{Input:} Number of function evaluations (NFE) $2N$, timesteps $1=t_{N}>t_{N-1}>\dots>t_{n}>t_{n-1}>\dots>t_0=0$, initial condition $\x_1$
\begin{algorithmic}[1]
\FOR {$n = N$ \TO $1$}
\STATE $s\leftarrow t_n$
\STATE $t\leftarrow t_{n-1}$
\STATE Prediction: 
$
\hat\x_t\leftarrow\frac{\alpha_t\sigma_t\bar\sigma_t}{\alpha_s\sigma_s\bar\sigma_s}\x_s+\frac{\alpha_t}{\sigma_1^2}\left[\left(\bar\sigma_t^2-\frac{\bar\sigma_s\sigma_t\bar\sigma_t}{\sigma_s}\right)\x_\theta(\x_s,s)+\left(\sigma_t^2-\frac{\sigma_s\sigma_t\bar\sigma_t}{\bar\sigma_s}\right)\frac{\x_1}{\alpha_1}\right]
$
\STATE Correction: 
$
     \x_t\leftarrow\frac{\alpha_t\sigma_t\bar\sigma_t}{\alpha_s\sigma_s\bar\sigma_s}\x_s+\frac{\alpha_t}{\sigma_1^2}\left[\left(\bar\sigma_t^2-\frac{\bar\sigma_s\sigma_t\bar\sigma_t}{\sigma_s}\right)\frac{\x_\theta(\x_s,s)+\x_\theta(\hat\x_t,t)}{2}+\left(\sigma_t^2-\frac{\sigma_s\sigma_t\bar\sigma_t}{\bar\sigma_s}\right)\frac{\x_1}{\alpha_1}\right]
$
\ENDFOR
\end{algorithmic}
\textbf{Output:} $\x_{0}$
\end{algorithm}


\section{Model Parameterization}
\label{appendix:training-objective}

Apart from $\x_0$ predictor $\x_\theta$ presented in Section~\ref{sec:model_training}, we can consider other parameterizations:
\begin{itemize}
    \item Noise predictor $\epsilonv_\theta^{\widehat\Psi}$ corresponding to $\nabla\log \widehat\Psi_t=-\frac{\x_t-\alpha_t\x_0}{\alpha_t^2\sigma_t^2}$ that used in I2SB~\citep{I2SB}. The prediction target of $\epsilonv_\theta^{\widehat\Psi}$ is:
    \begin{equation}
    \label{eq:noise-predictor}
    \epsilonv_\theta^{\widehat\Psi} \rightarrow \frac{\x_t-\alpha_t\x_0}{\alpha_t\sigma_t}
    \end{equation}
    \item Noise predictor $\epsilonv_\theta^{\SB}$ corresponding to the score $\nabla\log p_t$ of the SB. Since
    $
        \nabla\log p_t(\x_t)=-\frac{\x_t-\frac{\alpha_t\bar\sigma_t^2\x_0+\bar\alpha_t\sigma_t^2\x_1}{\sigma_1^2}}{\frac{\alpha_t^2\bar\sigma_t^2\sigma_t^2}{\sigma_1^2}}
    $, the prediction target of $\epsilonv_\theta^{\SB}$ is
    \begin{equation}
    \epsilonv_\theta^{\SB}\rightarrow \frac{\x_t-\frac{\alpha_t\bar\sigma_t^2\x_0+\bar\alpha_t\sigma_t^2\x_1}{\sigma_1^2}}{\frac{\alpha_t\bar\sigma_t\sigma_t}{\sigma_1}}
    \end{equation}
    \item Velocity predictor $\vv_\theta$ arising from flow matching techniques~\citep{FlowMatching,tong2023improving,tong2023simulation,zheng2023improved}, which aims to directly predict the drift of the PF-ODE:
    \begin{equation}
        \vv_\theta\rightarrow f(t)\x_t- \frac{1}{2}g^2(t) \frac{\x_t-\bar\alpha_t\x_1}{\alpha_t^2\bar\sigma_t^2}+\frac{1}{2}g^2(t)\frac{\x_t-\alpha_t\x_0}{\alpha_t^2\sigma_t^2}
    \end{equation}
\end{itemize}
Empirically, across all parameterizations, we observe that the $\x_0$ predictor and the noise predictor $\epsilonv_\theta^{\widehat\Psi}$ work well in the TTS task and Table~\ref{tab:parameterizationcomparison} shows that the $\x_0$ predictor is generally better in sample quality. Hence, we adopt the $\x_0$ predictor as the default training setup for Bridge-TTS. For the $\epsilonv_\theta^{\SB}$ predictor and $\vv_\theta$ predictor, we find that they lead to poor performance on the TTS task.
We can intuitively explain this phenomenon by taking a simple case $f(t)=0,g(t)=\sigma$. In this case, we have $\x_t=(1-t)\x_0+t\x_1+\sigma\sqrt{t(1-t)}\epsilonv,\epsilonv\sim\Nc(\vect0,\Iv)$, and the prediction targets are
\begin{equation}
    \begin{aligned}
        \x_\theta&\rightarrow\x_0\\
        \epsilonv_\theta^{\widehat\Psi}&\rightarrow\frac{\x_t-\x_0}{\sigma\sqrt{t}}=\sqrt{t}(\x_1-\x_0)+\sigma\sqrt{1-t}\epsilonv\\
        \epsilonv_\theta^{\SB}&\rightarrow\frac{\x_t-(1-t)\x_0-t\x_1}{\sigma\sqrt{t(1-t)}}=\epsilonv\\
        \sqrt{t(1-t)}\vv_\theta&\rightarrow\frac{(1-2t)\x_t-(1-t)\x_0+t\x_1}{2\sqrt{t(1-t)}}=\sqrt{t(1-t)}(\x_1-\x_0)+\sigma\frac{1-2t}{2}\epsilonv
    \end{aligned}
\end{equation}
Therefore, $\epsilonv_\theta^{\SB}$ and $\vv_\theta$ both predict $\epsilonv$ when $t\rightarrow 1$, while $\x_\theta$ and $\epsilonv_\theta^{\widehat\Psi}$ tends to predict $\x_0,\x_1$-related terms in such scenario. We can conclude that the former way of prediction is harmful on TTS task.

\begin{table}[ht]
\small
\caption{CMOS comparison of different parameterizations of Bridge-TTS.}
\label{tab:parameterizationcomparison}
\centering
\begin{tabular}{l|cc}
\toprule 
Method & NFE$=$4 & NFE$=$1000  \\
\midrule
Bridge-TTS (gmax + $\x_0$ predictor) & 0 & 0 \\
Bridge-TTS (gmax + $\epsilonv_\theta^{\widehat\Psi}$ predictor) & - 0.15 & - 0.12  \\
\bottomrule
\end{tabular}
\end{table}

\section{Forward Process}
\label{appendix:forwardprocess}
In this section, we display the stochastic trajectory of the Bridge-SDE in \eqref{eq:bridge-SDE} and compare it with the diffusion counterpart in \eqref{eq:forwardSDE-GradTTS}. In general, the marginal distribution of these SDEs shares the form $p_t = \gN(\vx_t ; \vw_{t} \xv_0 + \bar\vw_t \xv_1,  \tilde\sigma_t^2 \boldsymbol I)$. In Figure~\ref{fig:forwardweight}, we show the scaling factors $\vw_t$ and $\bar\vw_t$ for $\vx_0$ and $\vx_1$ and the variance $\tilde\sigma_t^2$ at time $t$. As described in Section~\ref{sec:ablation_study}, the Bridge-gmax and Bridge-VP have an asymmetric pattern of marginal variance that uses more steps to denoise towards the ground truth $\x_0$, while the constant $g(t)$ schedule specifies the same noise-additive and denoising steps. As a comparison, the diffusion-based model only performs denoising steps.

\begin{figure}[htbp]
    \centering
    \includegraphics[width=0.95\textwidth]{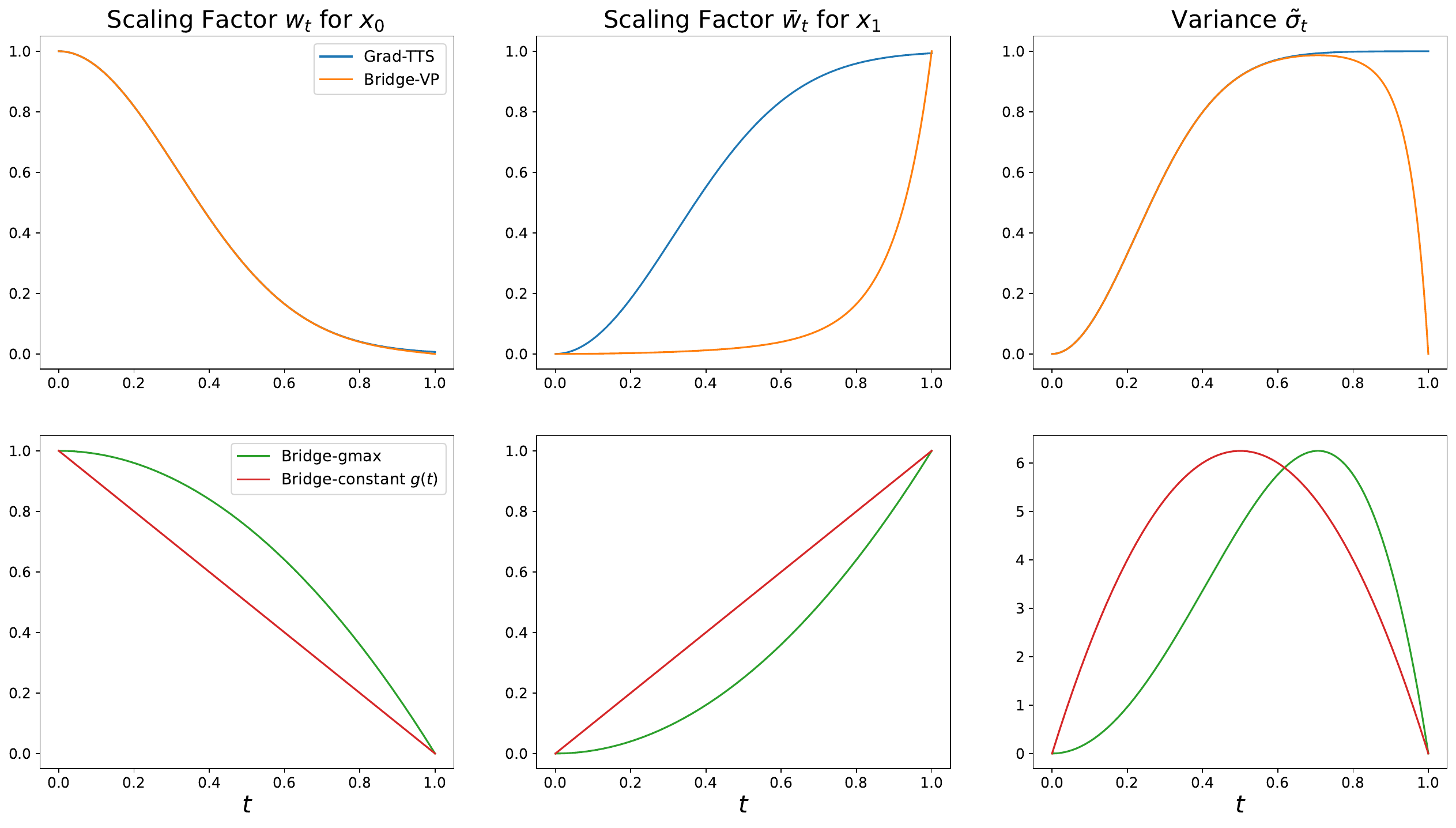}
    \caption{The scaling factor and variance in Grad-TTS and Bridge-TTS.}
    \label{fig:forwardweight}
\end{figure}

\newpage

\section{Baseline Models}
\label{Baselinemodels}

Apart from ground-truth recording and the sample synthesized by vocoder from ground-truth mel-spectrogram, we take seven diffusion-based TTS systems, one end-to-end TTS system, and one transformer-based TTS model as our baseline models. We follow their official implementation or the settings reported in their publications to produce the results. We introduce each of our baseline models below:

\textbf{1. FastSpeech 2} \citep{Fastspeech2} 
is one of the most popular non-autoregressive TTS models, and widely used as the baseline in previous diffusion-based TTS systems \citep{ResGrad,Diffsinger,CoMoSpeech}. Following its original setting, we train the model with a batch size of 48 sentences and 160k training steps until convergence by using 8 NVIDIA V100 GPU.

\textbf{2. VITS} \citep{VITS} 
provides a strong baseline of end-to-end TTS systems and is widely taken as a baseline in TTS systems for sample quality comparison. Different from other baseline models using pre-trained vocoder to generate waveform, VITS directly synthesizes waveform from text input. In training and testing, we follow their open-source implementation\footnote{\url{https://github.com/jaywalnut310/vits}}.

\textbf{3. DiffSinger} \citep{Diffsinger} 
is a TTS model developed for TTS synthesis and text-to-singing synthesis. It is built on denoising diffusion probabilistic models \citep{DDPM}, using standard Gaussian noise $\mathcal{N}(\boldsymbol 0, \boldsymbol I)$ in the diffusion process. Moreover, an auxiliary model is trained to enable its shallow reverse process, \textit{i.e.}, reducing the distance between prior distribution and data distribution. We follow their open-source implementation\footnote{\url{https://github.com/MoonInTheRiver/DiffSinger}}, which contains a warm-up stage for auxiliary model training and a main stage for diffusion model training.

\textbf{4. DiffGAN-TTS} \citep{DiffGAN-TTS}\footnote{\url{https://github.com/keonlee9420/DiffGAN-TTS}} develops expressive generator and time-dependent discriminator to learn the non-Gaussian denoising distribution \citep{DiffusionGAN} in few-step sampling process of diffusion models. Following their publication, we train DiffGAN-TTS with time steps $T=4$. For both the generator and the discriminator, we use the Adam optimizer, with $\beta_{1}=0.5$ and $\beta_{2}=0.9$. Models are trained using a single NVIDIA V100 GPU. We set the batch size as 32, and train models for 400k steps until loss converges.

\textbf{5. ProDiff} \citep{ProDiff} 
is a fast TTS model using progressive distillation \citep{ProgressiveDis}. The standard Gaussian noise $\mathcal{N}(\boldsymbol 0, \boldsymbol I)$ is used in the diffusion process and taken as the prior distribution. We use their $2$-step diffusion-based student model, which is distilled from a 4-step diffusion-based teacher model ($\boldsymbol x_0$ prediction). We follow their open-source implementation\footnote{\url{https://github.com/Rongjiehuang/ProDiff}}.

\textbf{6. Grad-TTS} \citep{Grad-TTS}\footnote{\url{https://github.com/huawei-noah/Speech-Backbones/tree/main/Grad-TTS}} 
is a widely used baseline in diffusion models \citep{ProDiff,ResGrad,LightGrad,CoMoSpeech} and conditional flow matching \citep{MachaTTS,VoiceFlow} based TTS systems. It is established on SGMs, providing a strong baseline of generation quality. Moreover, it realizes fast sampling with the improved prior distribution $\mathcal{N}(\boldsymbol \mu, \boldsymbol I)$ and the temperature parameter $\tau=1.5$ in inference. Following its original setting and publicly available implementation, we train the model with a batch size of 16 and 1.7 million steps on 1 NVIDIA 2080 GPU. The Adam optimizer is used and the learning rate is set to a constant, 0.0001.

\textbf{7. FastGrad-TTS} \citep{FastGrad-TTS} 
equips pre-trained Grad-TTS \citep{Grad-TTS} with the first-order SDE sampler proposed by \citep{GradVC}. The Maximum Likelihood solver reduces the mismatch between the reverse and the forward process. In comparison with the first-order Euler scheme, this solver has shown improved quality in both voice conversion and TTS synthesis. We implement it for the pre-trained Grad-TTS model with the Equation (6)-(9) in its publication.

\textbf{8. ResGrad} \citep{ResGrad} is a diffusion-based post-processing module to improve the TTS sample quality, where the residual information of a pre-trained FastSpeech 2 \citep{Fastspeech2} model is generated by a diffusion model. The standard Gaussian noise $\mathcal{N}(\boldsymbol 0, \boldsymbol I)$ is used in the diffusion process and taken as prior. We invite the authors to generate some test samples for us.

\textbf{9. CoMoSpeech} \citep{CoMoSpeech}\footnote{\url{https://github.com/zhenye234/CoMoSpeech}} 
is a recent fast sampling method in TTS and text-to-singing synthesis, achieving one-step generation with the distillation technique in consistency models \citep{ConsistencyModels}. As Grad-TTS is employed as its TTS backbone, the model uses $\mathcal{N}(\boldsymbol \mu, \boldsymbol I)$ as prior distribution and is trained for 1.7 million iterations on a single NVIDIA A100 GPU with a batch size of 16. The Adam optimizer is adopted with a learning rate 0.0001.

\section{Additional Results}
\label{additionalresults}

\subsection{CMOS Test}

\begin{table}[h]
\small
    \centering
    \caption{CMOS criteria.}
    \label{tab:cmos}
    \begin{tabular}{c|c|c|c|c|c|c|c}
    \toprule
    Comparison Quality  & \multicolumn{3}{c|}{Left better} & Equal & \multicolumn{3}{c}{Right better} \\
    \midrule
    Rating    &  +3 & +2 & +1 & 0 & -1 & -2 & -3 \\
    \bottomrule
    \end{tabular}
\end{table}

We conduct the Comparison Mean Opinion Score (CMOS) test by 15 Master workers on Amazon Mechanical Turk to compare the generation quality of two different models. The raters give a score from +3 (left better) to -3 (right better) with 1 point increments on each of the $20$ pair samples generated by two models, as shown in Table \ref{tab:cmos}. The three scores 3, 2, 1 denotes much better, better, and slightly better, respectively. The plus sign $+$ and the minus sign $-$ denotes the left and the right model respectively.

\begin{table}[ht]
\small
\caption{CMOS comparison between Grad-TTS and Bridge-TTS.}
\label{tab:cmosqualitycomparison}
\centering
\begin{tabular}{l|c}
\toprule 
Method & CMOS ($\uparrow$)  \\
\midrule
Grad-TTS & 0  \\
Bridge-TTS (gmax) & +0.21  \\
\bottomrule
\end{tabular}
\end{table}

To further demonstrate the advantage of data-to-data process in Bridge-TTS over the data-to-noise process in Grad-TTS, we conduct a CMOS test between Bridge-TTS (Bridge-gmax schedule with $\beta_0=0.01, \beta_1=50$, $\x_0$ predictor, and first-order SDE sampler with $\tau_b=2$) and Grad-TTS ($\tau_d=1.5$). As shown in Table \ref{tab:cmosqualitycomparison}, our Bridge-TTS distinctively outperforms our diffusion counterpart Grad-TTS.

\subsection{Preference Test}
Apart from using the MOS and CMOS tests to evaluate sample quality, we conducted a blind preference test when NFE=1000 and NFE=2, in order to demonstrate our superior generation quality and efficient sampling process, respectively. In each test, we generated 100 identical samples with two different models from the test set LJ001 and LJ002, and invited 11 judges to compare their overall subjective quality. The judge gives a preference when he thinks a model is better than the other, and an identical result when he thinks it is hard to tell the difference or the models have similar overall quality. In both preference tests, the settings of noise schedule, model parameterization and sampling process in Bridge-TTS are Bridge-gmax schedule with $\beta_0=0.01, \beta_1=50$, $\x_0$ predictor, and first-order SDE sampler with $\tau_b=2$, respectively.

In the case of NFE=1000, as shown in Figure \ref{fig:preferencetest} (a), when Bridge-TTS-1000 is compared with our diffusion counterpart Grad-TTS-1000 \citep{Grad-TTS} (temperature $\tau_d=1.5$), 8 of the 11 invited judges vote for Bridge-TTS-1000, and 3 of them think the overall quality is similar. In our blind test, none of the 11 judges preferred Grad-TTS-1000 to Bridge-TTS-1000. The comparison result is aligned with the MOS test shown in Table and CMOS test shown in Table \ref{tab:cmosqualitycomparison}.

In the case of NFE=2, as shown in Figure \ref{fig:preferencetest} (b), when Bridge-TTS-2 is compared with state-of-the-art fast sampling method in diffusion-based TTS systems, CoMoSpeech (1-step generation) \citep{CoMoSpeech}, 9 of the 11 invited judges vote for Bridge-TTS-2, and 2 of the judges vote for CoMoSpeech-1. Although Bridge-TTS employs 2 sampling steps while CoMoSpeech-1 only uses 1, the RTF of both methods have been very small (0.007 for CoMoSpeech-1 vs 0.009 for Bridge-TTS-2), and Bridge-TTS does not require any distillation process. According to our collected feedback, 9 judges think the overall quality (e.g., quality, naturalness, and accuracy) of Bridge-TTS is significantly better.

\section{Generated Samples}
\label{generatedsamples}

With the pre-trained HiFi-GAN \citep{HiFi-GAN} vocoder, we show the 80-band mel-spectrogram of several synthesized test samples of baseline models and our Bridge-TTS (Bridge-gmax schedule with $\beta_0=0.01, \beta_1=50$, $\x_0$ predictor, and first-order SDE sampler with $\tau_b=2$) below. The mel-spectrogram of ground-truth recording is shown for comparison. More generated speech samples can be visited on our website: \url{https://bridge-tts.github.io/}.
\begin{figure}[t!]
    \centering
    \includegraphics[width=0.95\textwidth]{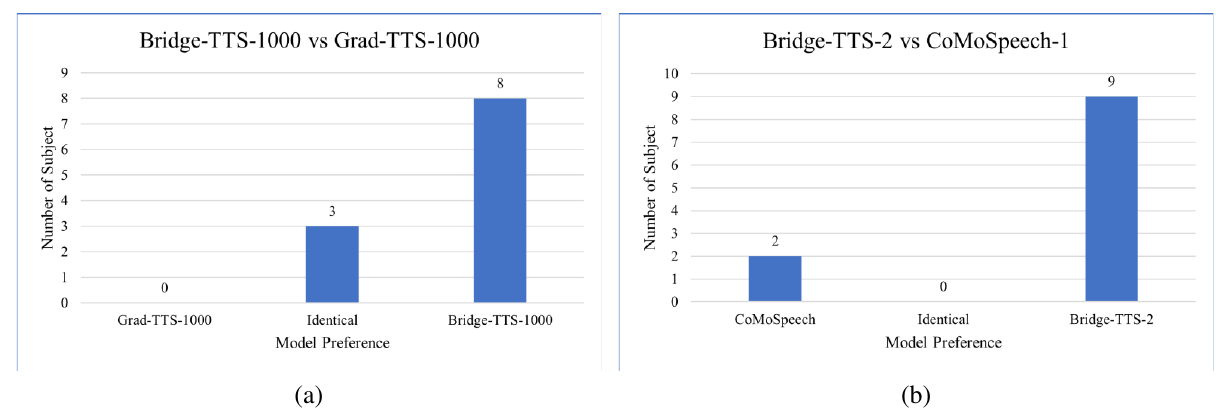}
    \caption{The preference test between Bridge-TTS and diffusion-based TTS systems.}
    \label{fig:preferencetest}
\end{figure}

\newpage
\paragraph{1000-step generation}

As exhibited in Figure \ref{fig:t1000sample0006} and Figure \ref{fig:t1000sample0029}, when NFE=1000, our method generates higher-quality speech than Grad-TTS (temperature $\tau_d=1.5$) built on data-to-noise process, demonstrating the advantage of our proposed data-to-data process over data-to-noise process in TTS.   

\begin{figure}[b!]
    \centering
    \subfloat[\label{fig:t1000sample0006}]{
    \includegraphics[width=0.75\textwidth]{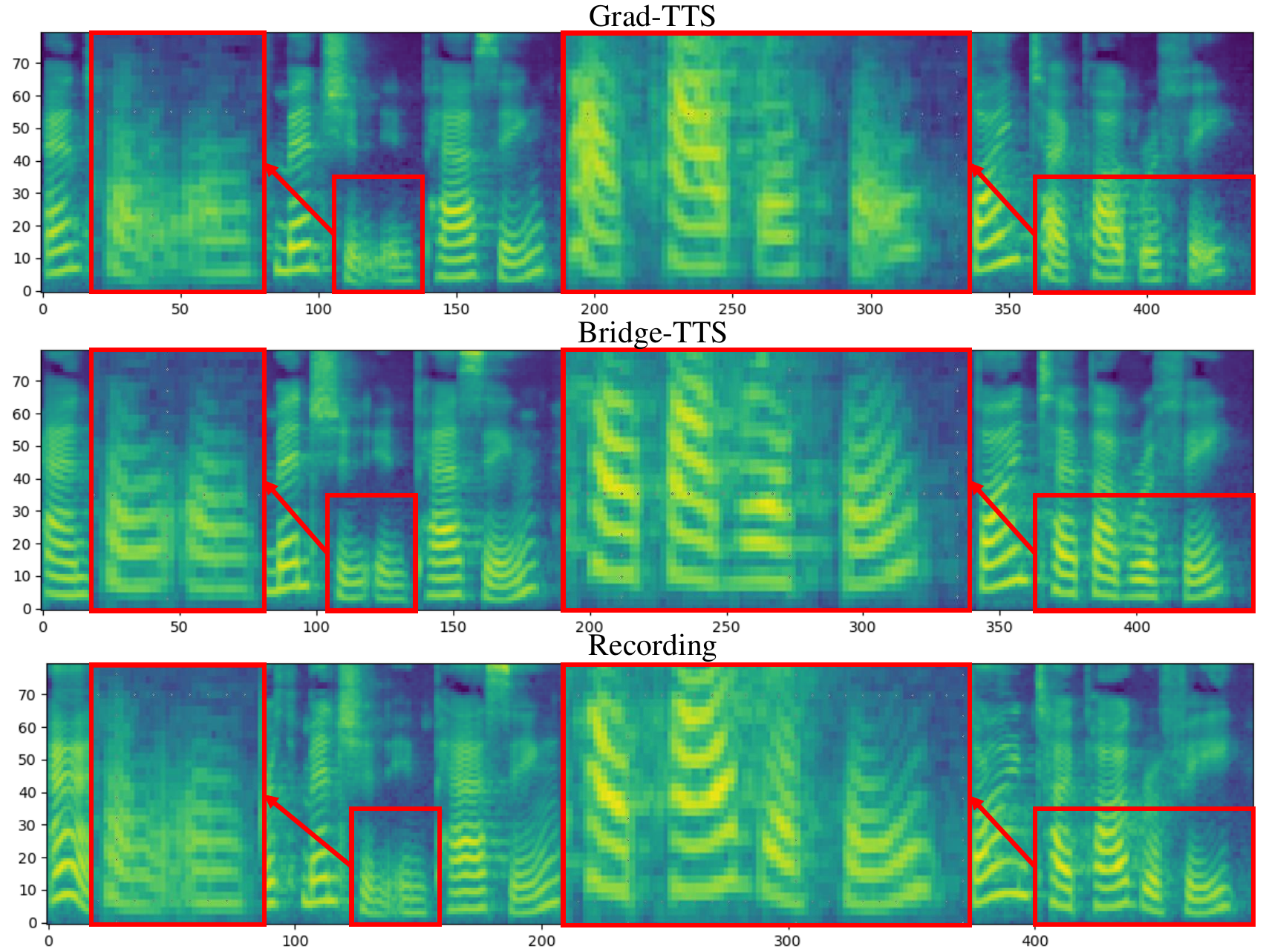}}
	\\
    \subfloat[\label{fig:t1000sample0029}]{
    \includegraphics[width=0.75\textwidth]{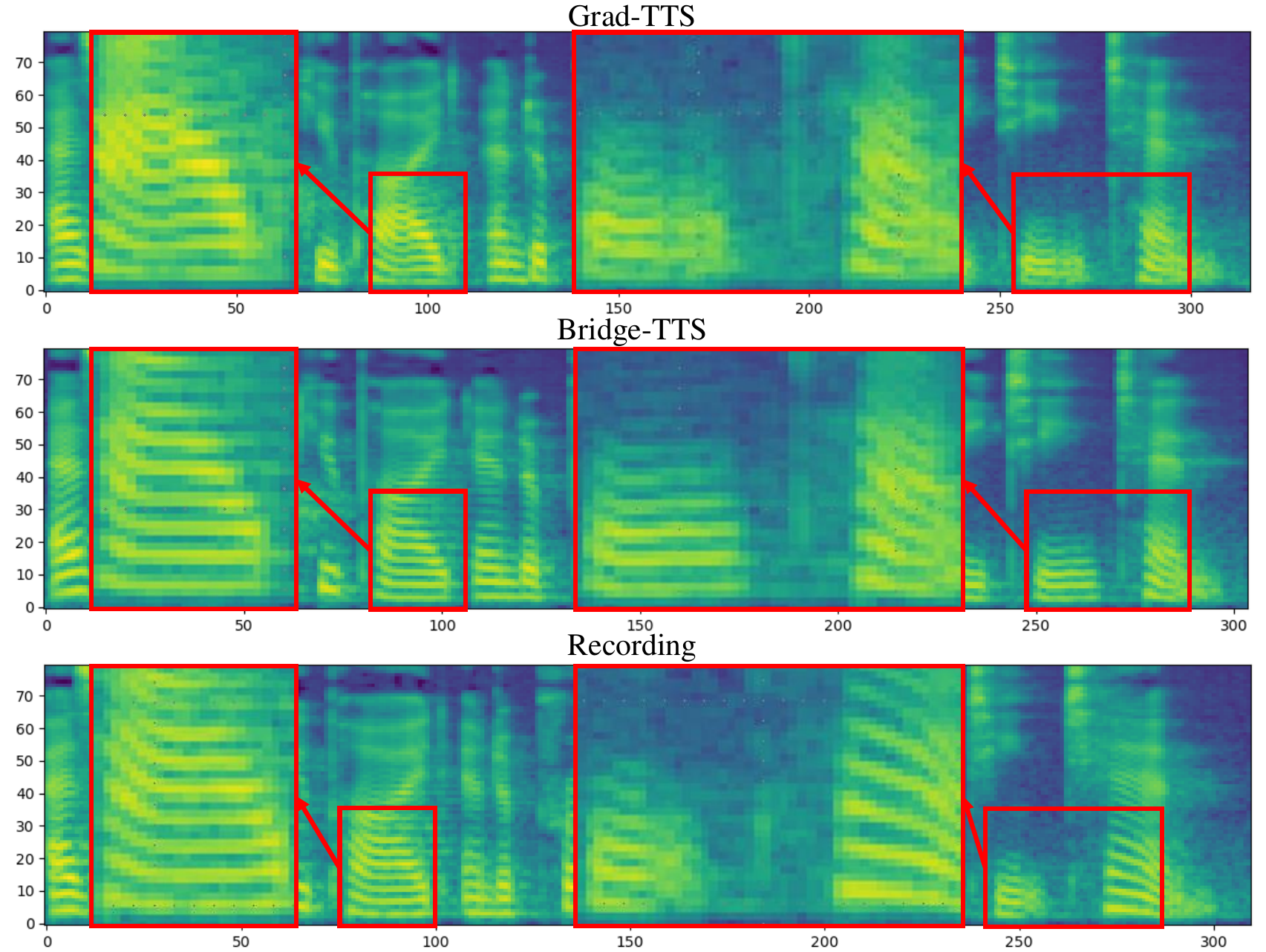} }
    \caption{The mel-spectrogram of synthesized (NFE=1000) and ground-truth LJ001-0006 and LJ002-0029.}
    \label{fig3} 
\end{figure}

\newpage
\paragraph{50-Step Generation}
In Figure \ref{fig:t50sample0035}, our method shows higher generation quality than Grad-TTS \citep{Grad-TTS}. In Figure \ref{fig:t50sample0029}, we continue to use the test sample LJ002-0029 to demonstrate our performance. In comparison with NFE=1000 shown in Figure \ref{fig:t1000sample0029}, when reducing NFE from 1000 to 50, Grad-TTS generates fewer details and sacrifices the sample quality, while our method still generates high-quality samples.

\begin{figure}[b!]
    \centering
    \subfloat[\label{fig:t50sample0035}]{
    \includegraphics[width=0.73\textwidth]{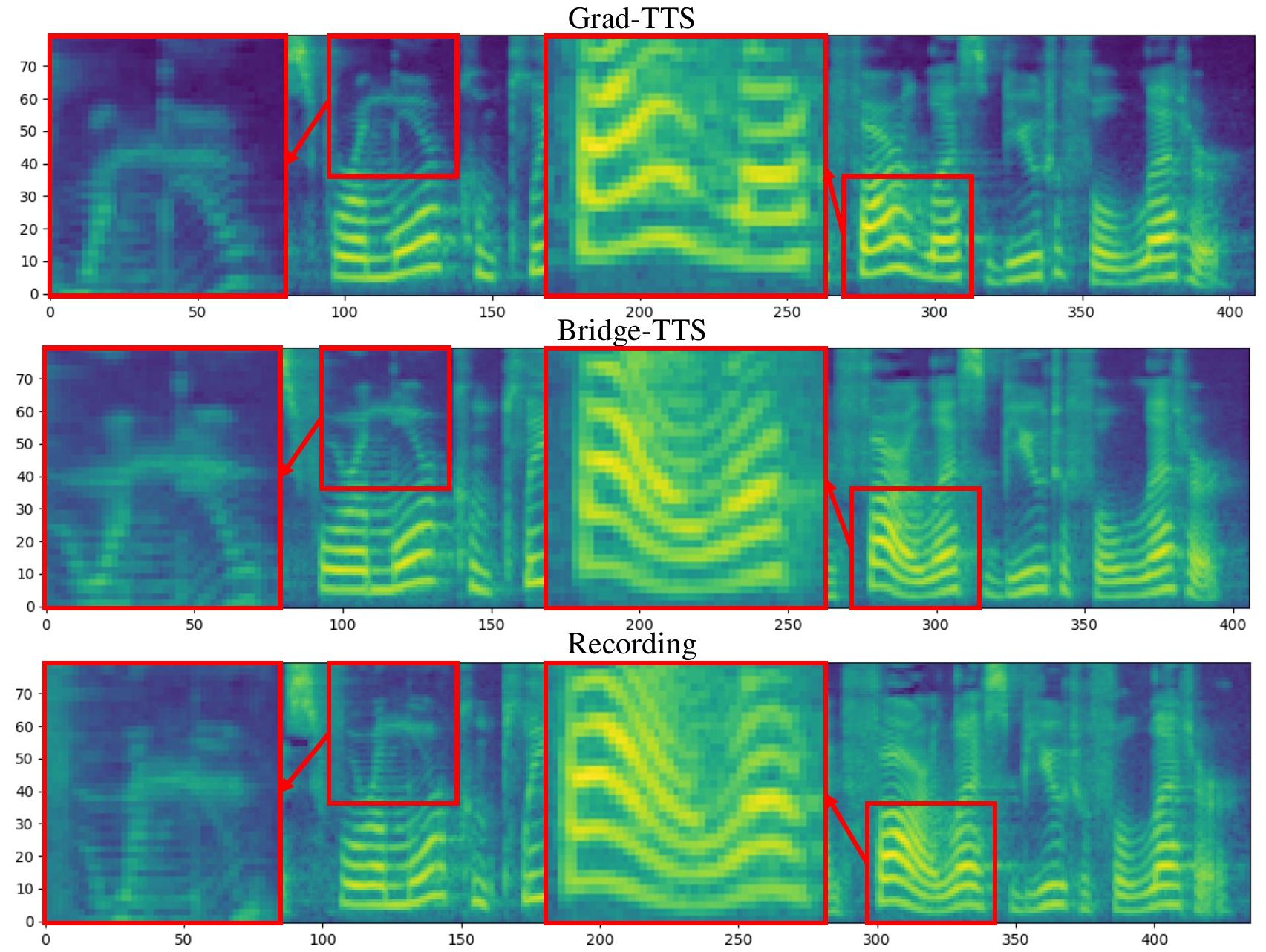}}
	\\
    \subfloat[\label{fig:t50sample0029}]{
    \includegraphics[width=0.73\textwidth]{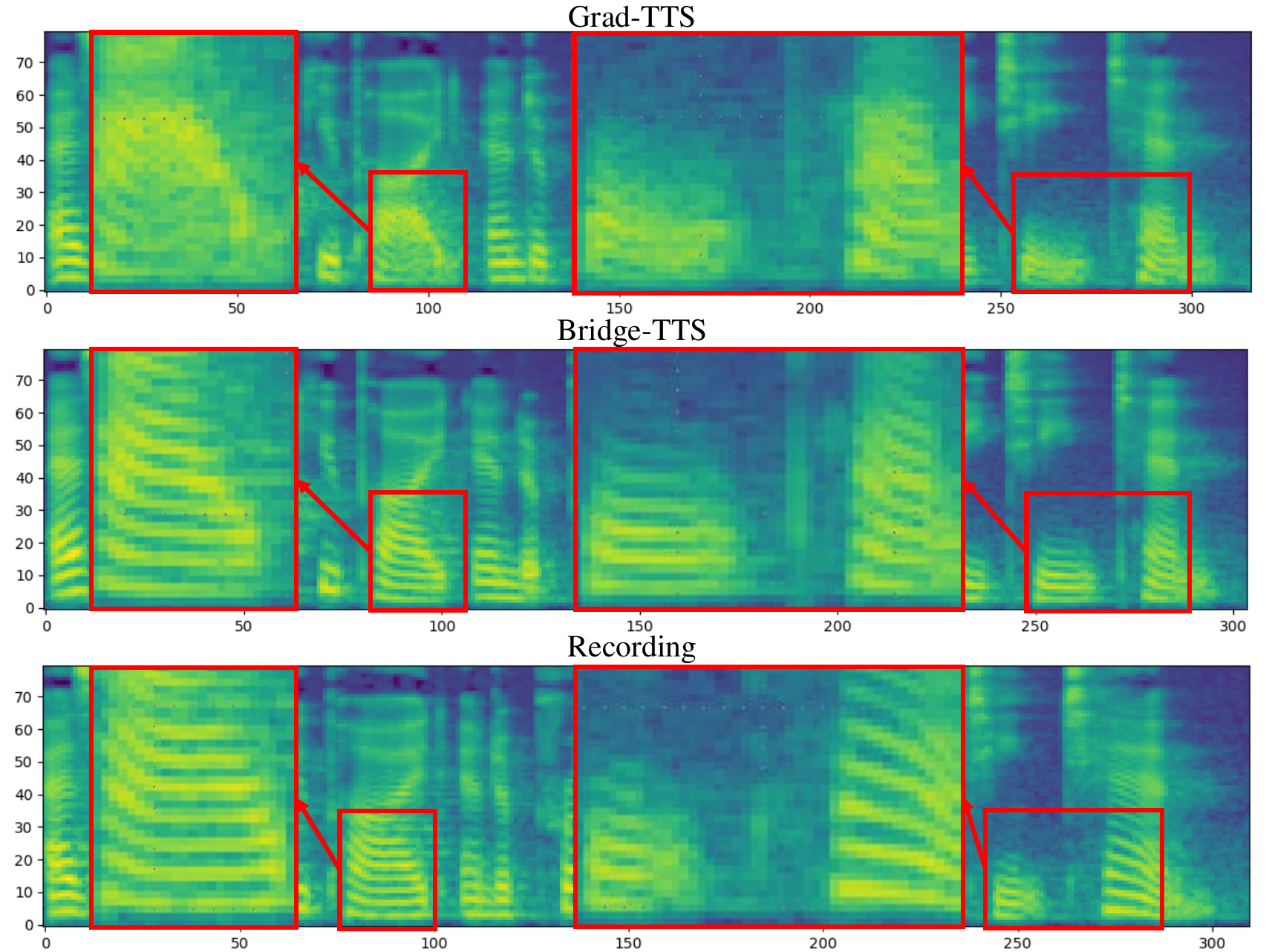} }
    \caption{The mel-spectrogram of synthesized (NFE=50) and ground-truth LJ001-0035 and LJ002-0029.}
    \label{fig3} 
\end{figure}

\newpage
\paragraph{4-Step Generation}
In Figure \ref{fig:t4sample0032}, we show our comparison with two baseline models, \textit{i.e.}, Grad-TTS \citep{Grad-TTS} and FastGrad-TTS \citep{FastGrad-TTS}. The latter one employs a first-order maximum-likelihood solver \citep{GradVC} for the pre-trained Grad-TTS, and reports stronger quality than Grad-TTS in 4-step synthesis. In our observation, when NFE=4, FastGrad-TTS achieves higher quality than Grad-TTS, while our method Bridge-TTS achieves higher generation quality than both of them, demonstrating the advantage of our proposed data-to-data process on sampling efficiency in TTS synthesis.
\begin{figure}[hb]
    \centering
    \includegraphics[width=0.75\textwidth]{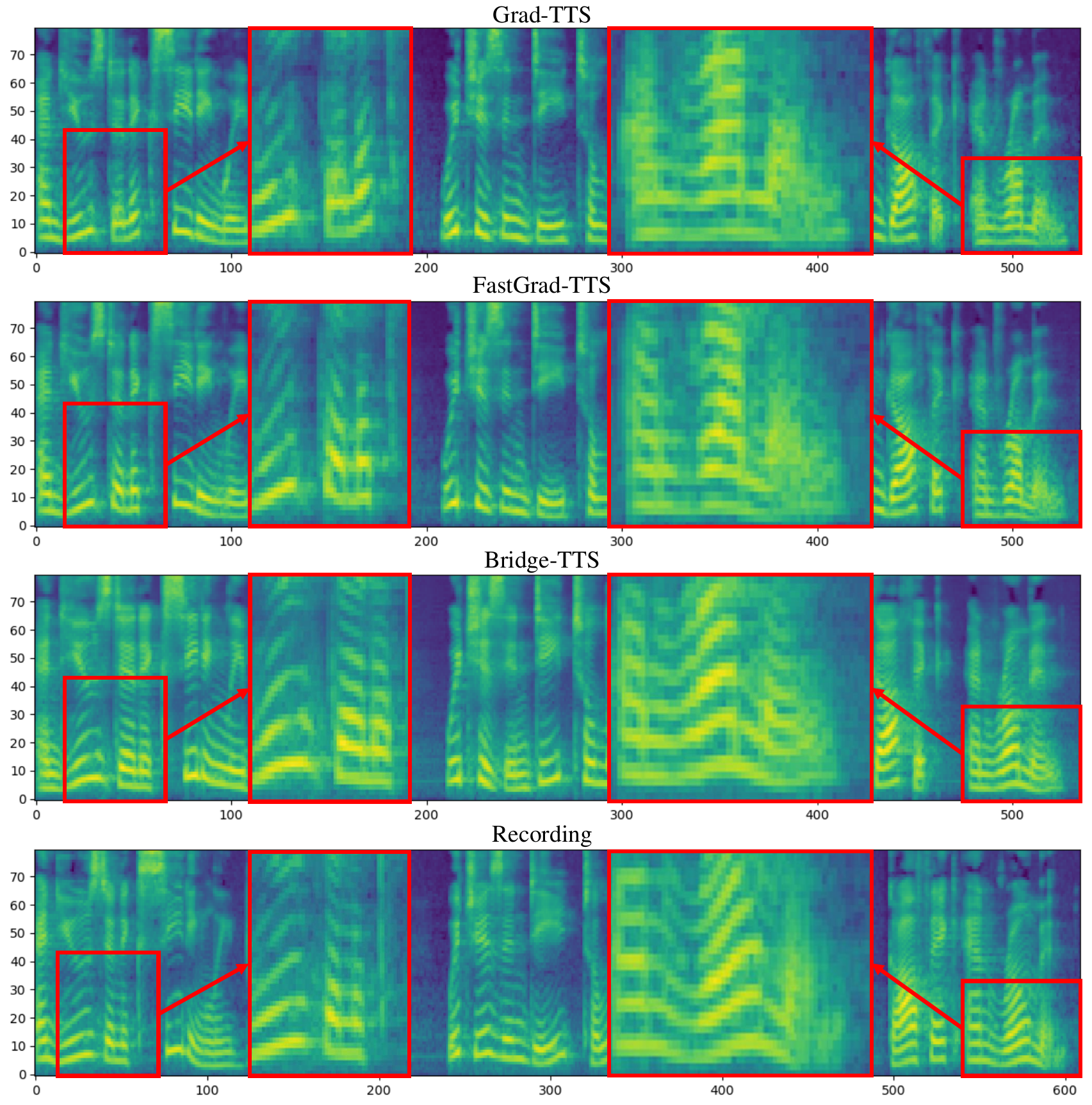}
    \caption{The mel-spectrogram of synthesized (NFE=4) and ground-truth sample LJ001-0032.}
    \label{fig:t4sample0032}
\end{figure}

\newpage
\paragraph{2-Step Generation}
When NFE is reduced to 2, we compare our method Bridge-TTS with the transformer-based model FastSpeech 2 \citep{Fastspeech2} and two diffusion-based TTS systems using distillation techniques. ProDiff \citep{ProDiff} employs progressive distillation achieving 2-step generation. CoMoSpeech \citep{CoMoSpeech} employs consistency distillation achieving 1-step generation. In our observation, in this case, the RTF of each model has been very small, and the overall generation quality is reduced. In the subjective test, our Bridge-TTS outperforms the other three methods. We show a short test sample, LJ001-0002, in Figure \ref{fig:t2sample0002}.

\begin{figure}[htbp]
    \centering
    \includegraphics[width=0.75\textwidth]{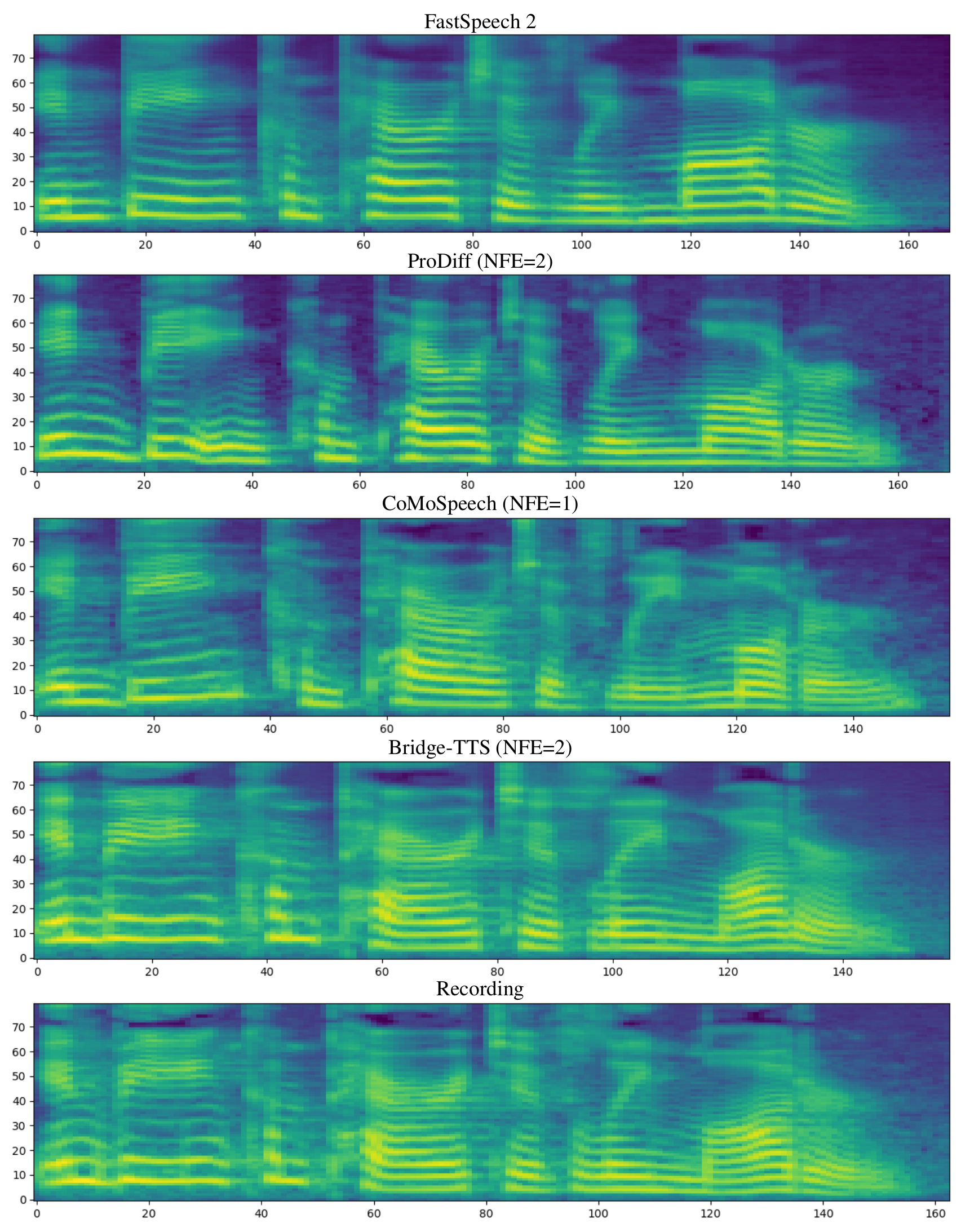}
    \caption{The mel-spectrogram of synthesized (NFE$\leq$2) and ground-truth sample LJ001-0002.}
    \label{fig:t2sample0002}
\end{figure}

\end{document}